\definecolor{citecolor}{RGB}{34,139,34}
\begin{document}
\title{Modeling Uncertain Feature Representation for Domain Generalization}

\author{Xiaotong Li, Zixuan Hu, Jun Liu, Yixiao Ge, Yongxing Dai, Ling-Yu Duan
\IEEEcompsocitemizethanks{\IEEEcompsocthanksitem X. Li, Z. Hu, and Y. Dai are with Peking University, Beijing, China. Email: lixiaotong@stu.pku.edu.cn, hzxuan@pku.edu.cn, and yongxingdai@pku.edu.cn.
\IEEEcompsocthanksitem J. Liu is with the Information Systems Technology and Design Pillar, Singapore University of Technology and Design, Singapore. Email: jun\_liu@sutd.edu.sg.
\IEEEcompsocthanksitem Y. Ge is with the ARC Lab, Tencent PCG, China. Email: yixiaoge@tencent.com.
\IEEEcompsocthanksitem L. Duan is with the Peking University, Beijing, China, and also with Peng Cheng Laboratory, Shenzhen, China. Email: lingyu@pku.edu.cn.
}
\thanks{Corresponding author: Ling-Yu Duan}}


\markboth{Work on progress}%
{Shell \MakeLowercase{\textit{et al.}}: Bare Demo of IEEEtran.cls for Computer Society Journals}
\IEEEtitleabstractindextext{%
\begin{abstract}
Though deep neural networks have achieved impressive success on various vision tasks, obvious performance degradation still exists when models are tested in out-of-distribution scenarios. In addressing this limitation, we ponder that the feature statistics (mean and standard deviation), which carry the domain characteristics of the training data, can be properly manipulated to improve the generalization ability of deep learning models. Existing methods commonly consider feature statistics as deterministic values measured from the learned features and do not explicitly model the uncertain statistics discrepancy caused by potential domain shifts during testing.
In this paper, we improve the network generalization ability by modeling domain shifts with uncertainty (DSU), \textit{i.e.}, characterizing the feature statistics as uncertain distributions during training.
Specifically, we hypothesize that the feature statistic, after considering the potential uncertainties, follows a multivariate Gaussian distribution. Consequently, each feature statistic is no longer a deterministic value, but a probabilistic point with diverse distribution possibilities.
With the uncertain feature statistics, the model can be trained to well alleviate the domain perturbations. 
During inference, we propose an instance-wise adaptation strategy that can adaptively deal with the unforeseeable shift and further enhance the generalization ability of the trained model with negligible additional cost. We also conduct theoretical analysis on the aspects of generalization error bound and the implicit regularization effect, showing the efficacy of our method. Extensive experiments demonstrate that our method consistently improves the network generalization ability on multiple vision tasks, including image classification, semantic segmentation, instance retrieval, and pose estimation. Our methods are simple yet effective and can be readily integrated into networks without additional trainable parameters or loss constraints. Code will be released in \href{https://github.com/lixiaotong97/DSU}{https://github.com/lixiaotong97/DSU}.
\end{abstract}

\begin{IEEEkeywords}
Domain generalization, out-of-distribution, robustness, uncertainty
\end{IEEEkeywords}}

\maketitle
\IEEEdisplaynontitleabstractindextext
\IEEEpeerreviewmaketitle

\IEEEraisesectionheading{\section{Introduction}\label{sec:introduction}}
\IEEEPARstart{D}{eep} neural networks have demonstrated remarkable success in computer vision, but with a severe reliance on the assumption that the training and testing domains follow an independent and identical distribution \cite{ben2010theory,iidrisk}.
This assumption, however, does not hold in many real-world applications.
For instance, when employing segmentation models trained on sunny days for rainy and foggy environments \cite{RobustNet_2021_CVPR}, or recognizing art paintings with models that trained on photographs \cite{pacs_2017_ICCV}, inevitable performance drop can often be observed in such out-of-distribution deployment scenarios. 
Therefore, the problem of domain generalization is proposed to improve the robustness of the network on various unseen testing domains and becomes quite important for the development of vision models.

Domain characteristics primarily refer to the information that is more specific to the individual domains but less relevant to the task objectives, such as the photo style and capturing environment information in object recognition. According to works \cite{adain,li2021feature,mixstyle}, feature statistics (mean and standard deviation), as the moments of the learned features, carry informative domain characteristics of the training data. 
Consequently, domains with different data distributions generally have inconsistent feature statistics \cite{nips20calibration,nips19trans,rbn}.
Most deep learning methods follow Empirical Risk Minimization principle (ERM) \cite{erm} to minimize their average error over the training data \cite{cuipeng2021towards}.
Despite the satisfactory performance on the training domain, these methods do not explicitly consider the uncertain statistics discrepancy caused by potential domain shifts during testing. 
As a result, the trained models tend to overfit the training domain and show vulnerability to the statistic changes at testing time, substantially limiting the generalization ability of the learned representations.

Due to the diverse underlying possibilities of test domains and their training-time unforeseeablity, they may bring uncertain statistics shifts with different potential directions and intensities in space compared to the training domain (as shown in Figure \ref{fig:extra}), implying the uncertain nature of domain shifts. 
Considering such ``uncertainty'' of potential domain shifts, synthesizing novel feature statistics variants to model diverse domain shifts can improve the robustness of the trained network to different testing distributions.
Towards this end, we introduce a novel probabilistic method to improve the network generalization ability by properly \textit{modeling} \textbf{\emph{D}}\textit{omain}  \textbf{\emph{S}}\textit{hifts with}  \textbf{\emph{U}}\textit{ncertainty} (DSU), \textit{i.e.}, characterizing the feature statistics as uncertain distributions to model the diverse domain shifts.

In the proposed method, instead of treating each feature statistic as a deterministic point measured from the feature, we hypothesize that the feature statistic, after considering potential uncertainties, follows a multi-variate Gaussian distribution.
The distribution ``center'' is set as each feature's original statistic value, and the distribution ``scope'' represents the variant intensity considering underlying domain shifts. 
Uncertainty estimation is adopted here to depict the distribution ``scope'' of probabilistic feature statistics. Specifically, we estimate the distribution ``scope'' based on the variances of the mini-batch statistics in an efficient non-parametric manner. 
Subsequently, feature statistics variants are randomly sampled from the estimated Gaussian distribution and then used to replace the original deterministic values for modeling diverse domain shifts, as illustrated in Figure \ref{fig:gaussian}. 

The proposed uncertainty modeling approach during training can well utilize the observations of training domains to generate diverse and reasonable feature statistics perturbations, thus the model can be trained to effectively alleviate the domain shifts. After training, the model becomes static and will encounter kinds of test data. To further enhance the generalization ability to unforecastable domain shifts, we provide an inference-time adaptation strategy to adaptively calibrate the instance-wise feature statistics of test samples, which can provide better performances with negligible additional cost.

In this paper, we also
conduct a detailed analysis about the efficacy of our method from theoretical perspectives. On the one hand, for analyzing the effect of manipulations on feature statistics, we make an extension to obtain a \textit{generalization error bound} on the feature space, from which we can obtain an overall illustration about how our method acts on improving network generalization ability. On the other hand, we provide an analysis to show that DSU brings an \textit{implicit regularization effect} during the training dynamic process, which implicitly aligns the feature statistics of multiple source domains and can help the model encode better domain-invariant features.

A comprehensive evaluation is conducted to verify the effectiveness of our method to improve generalization ability on different kinds of vision tasks, \textit{e.g.,} \textit{multi-domain classification}, \textit{instance retrieval}, \textit{semantic segmentation}, and \textit{pose estimation}, showing that introducing uncertainty to feature statistics can well improve models' generalization against domain shifts and the performances can be further improved by incorporating the inference-time adaptation strategy.
Our method does not change the inherent model design and can be readily integrated into existing networks without bringing additional trainable parameters or loss constraints, which is simple yet effective to alleviate performance drops caused by domain shifts.

\begin{figure*}[h]
\begin{center}
\includegraphics[width=1.0\textwidth]{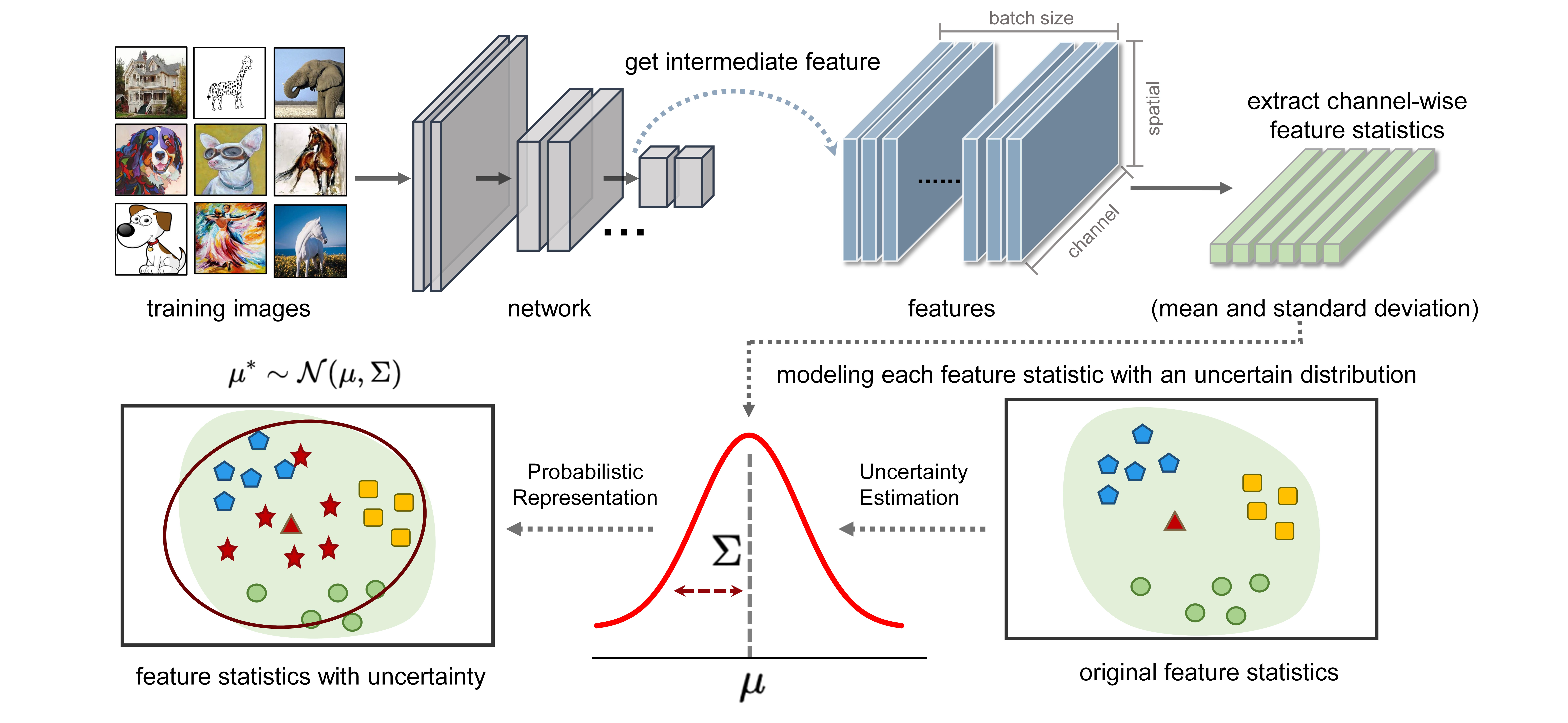}
\end{center}
\caption{Illustration of the proposed method DSU. Feature statistic is assumed to follow a multi-variate Gaussian distribution during training. When passed through this module, the new feature statistics randomly drawn from the corresponding distribution will replace the original ones to model the diverse domain shifts.}
\label{fig:gaussian}
\end{figure*}

Our main contributions can be summarized as follows:
\begin{itemize}
    \item[(1)] We propose to model the uncertainty of the domain shift during training, by considering each feature statistic following a multi-variate Gaussian distribution to model the diverse potential domain shifts.
    \item[(2)] We devise an inference-time adaptation strategy to further enhance the model generalization ability to deal with various unforecastable domain shifts.
    \item[(3)] We conduct a theoretical analysis of the feature space generalization error bound and implicit regularization effect to obtain an inclusive understanding of our method.
    \item[(4)] Comprehensive experiments on a wide range of vision tasks demonstrate the superiority and effectiveness of the proposed method. 
    
\end{itemize}

This work is an extension of our previous conference paper \cite{dsu}. The contributions of this paper over \cite{dsu} are as following several aspects. \textbf{Method:} other than the training-time uncertainty modeling introduced in \cite{dsu}, we further enhance it with instance adaptation during inference, jointly establishing a system-level solution for domain generalization problem. \textbf{Theoretical analysis:} to incisively show its working mechanism, we theoretically analyze our method \textit{w.r.t.} the generalization error bound and implicit regularization effect. \textbf{Empirical study:} we demonstrate the effectiveness of our method and its generalization ability by conducting more comprehensive experiments (\textit{e.g.}, pose estimation) and providing different kinds of detailed quantitative analysis.

\section{Related Work}

\subsection{Domain Generalization}
Domain generalization (DG) has been attracting increasing attention in the past few years, which aims to achieve out-of-distribution generalization on unseen target domains using only single or multiple source domain data for training \cite{NIPS2011_b571ecea}. Research on addressing this problem has been extensively conducted in the literature \cite{zhou2021domain,wangjingdong2021generalizing,cuipeng2021towards}. Here some studies that are more related to our work are introduced below.

\textbf{Data Augmentation}: Data augmentation is an effective manner for improving generalization ability and relieving models from overfitting in training domains. 
Most augmentation methods adopt various transformations at the image-level, such as AugMix \cite{2020augmix} and CutMix \cite{cutmix}. Besides using handcraft transformations,
mixup \cite{zhang2018mixup} trains the model by using pair-wise linearly interpolated samples in both the image and label spaces. Manifold Mixup \cite{manifoldmixup} further adopts
this linear interpolation from image level to feature level. 
Some recent works extend the above transformations to feature statistics for improving model generalization. AdaIN \cite{adain} proposes an instance style transformation with the feature statistics of a reference image. MixStyle \cite{mixstyle} adopts linear interpolation on feature statistics of two instances to generate synthesized samples. pAdaIn \cite{Nuriel_2021_CVPR} swaps statistics between the samples applied with a random permutation of the batch. EFDM \cite{efdm} implicitly performs the exact histogram matching on features to achieve style transfer. Different from them, our method, not relying on a specific reference sample, is based on the adaptative distribution that can produce not only linear changes but diverse variants with more possibilities.

\textbf{Invariant Representation Learning}: 
The main idea of invariant representation learning is to enable models to learn features that are invariant to domain shifts. Domain alignment-based approaches \cite{Li_2018_ECCV, Adversarial} learn invariant features by minimizing the distances between different distributions. Instead of enforcing the entire features to be invariant, disentangled feature learning approaches \cite{2020EccvDMG,pmlr-v119-lowrank} decouple the features into domain-specific and domain-invariant parts and learn their representations simultaneously. In addition, normalization-based methods, \cite{pan2018IBN_Net,RobustNet_2021_CVPR} adopt instance/batch normalization layers to remove the style information to obtain invariant representations. In contrast, DSU is not designed to normalize the features but to generate probabilistic feature statistics for modeling the diverse domain shifts, which is independent from model's inherent normalization layers and can be flexibly inserted into different positions. We also note that DSU is adopted during training and can be discarded during testing without changing the model design. 

\textbf{Learning Strategies}:
There are also some effective learning strategies that can be leveraged to improve generalization ability. 
Ensemble learning is an effective technique for boosting model performance. The ensemble predictions using a collection of diverse models \cite{dael} or modules \cite{DSON} can be adopted to improve generalization and robustness. Meta-learning-based methods \cite{maml,MLDG, Dai_2021_CVPR} learn to simulate the domain shifts following an episode training paradigm. Besides, self-challenging methods, such as RSC \cite{huangRSC2020}, force the model to learn a general representation by discarding dominant features activated on the training data.

\subsection{Uncertainty in Deep Learning}
Uncertainty capturing the ``noise'' and ``randomness'' inherent in the data has received increasing attention in deep representation learning. Variational Auto-encoder \cite{vae}, as an important method for learning generative models, can be regarded as a method to model the data uncertainty in the hidden space. Dropout \cite{dropout}, which is widely used in many deep learning models to avoid over-fitting, can be interpreted to represent model uncertainty as a Bayesian approximation \cite{dropoutasuncertainty}. In some works, uncertainty is also used to address the issues of low-quality training data. In person re-identification, DistributionNet \cite{dropoutasuncertainty} adopts uncertainty to model the person images of noise-labels and outliers. In face recognition, DUL \cite{dul} and PFE \cite{Shi_2019_ICCV} apply data uncertainty to simultaneously learn the feature embedding and its uncertainty, where the uncertainty is learned through a learnable subnetwork to describe the quality of the image. In contrast to the aforementioned works, our proposed method is designed to model the feature statistics uncertainty under potential domain shifts and acts as a feature augmentation way for handling out-of-distribution generalization problems.

\subsection{Test-time Adaptation}
Test-time adaptation aims to adapt the trained model to the target domain without accessing the source data in the test time. Tent \cite{tent} adapts the model parameter of the batch normalization (BN) layers by test-time entropy minimization. AdaBN \cite{adabn} proposes to modulate the statistics in the BN layers with counterparts calculated by all images of the target domain. Alpha-BN \cite{alphabn} calibrates statistics in BN layers by mixing source and target domain statistics with a pairwise class correlation loss. SHOT \cite{shot} adopts the entropy minimization with pseudo labeling for test-time adaptation. AdaContrast \cite{adacontrast} leverages contrastive learning and online refinement to learn better representations. 
Our proposed inference-time adaptation strategy is different from these methods. Our method does not perform model updating or any additional training for inference-time adaptation, but adaptively calibrates the individual sample’s feature statistics on-the-fly at test time, which is efficient and also very suitable for domain generalization where the test dataset is unavailable.
For the DG problem, the previous methods generally require additional model tuning to perform adaptation, which can be inefficient and might even no longer suitable owing to the unavailability of test dataset.


\section{Methodology}
In this section, we propose our solution for tackling the domain generalization problem. First, we introduce the preliminaries in Section \ref{sec: preliminaries}. Then, we introduce the training-time uncertainty modeling method in Section \ref{sec: uncertainty modeling} and inference-time adaptation strategy in Section \ref{sec: inference-time adaptation}. Finally, Section \ref{sec: overall} summarizes the overall framework of our method.

\subsection{Preliminaries}
\label{sec: preliminaries}

Given $x \in \mathbb{R}^{B\times C \times H \times W}$ to be the encoded features in the intermediate (hidden) layers of the network, we denote the instance-level feature statistic $\mu(x) \in \mathbb{R}^{B \times C}$ and $\sigma(x) \in \mathbb{R}^{B \times C}$ as the channel-wise feature mean and standard deviation of each instance in a mini-batch, respectively, which can be formulated as:
\begin{align}
    &\mu(x)=\frac{1}{HW}\sum\limits_{h=1}^{H}\sum\limits_{w=1}^{W} x_{b,c,h,w}, \\
    &\sigma^2(x)=\frac{1}{HW}\sum\limits_{h=1}^{H}\sum\limits_{w=1}^{W} (x_{b,c,h,w} - \mu(x))^2.
\end{align}
As the abstraction of features, feature statistics can capture informative characteristics of the corresponding domain (such as color, texture, and contrast), according to previous works \cite{adain, li2021feature}. In out-of-distribution scenarios, the feature statistics often show inconsistency with the training domain due to different domain characteristics \cite{nips19trans,rbn}, which is ill-suited to deep learning modules like nonlinearity layers and normalization layer and degenerates the model's generalization ability \cite{nips20calibration}.

However, most of the deep learning methods only treat feature statistics as deterministic values measured from the features while lacking explicit consideration of the potential uncertain statistical discrepancy. Owing to the model's inherent vulnerability to such discrepancy, the generalization ability of the learned representations is limited.
Some recent methods \cite{Nuriel_2021_CVPR,mixstyle, efdm} utilize feature statistics to tackle the domain generalization problem. Despite the success, they typically adopt linear manipulation (\textit{i.e.}, exchange and interpolation) on pairwise samples to generate new feature statistics, which limits the diversity of synthetic changes. Specifically, the direction of their variants is determined by the chosen reference sample, and such internal operation restricts their variant intensity. Thus these methods are sub-optimal when handling the diverse and uncertain domain shifts in real world.


\begin{figure*}[t]
\begin{center}
\includegraphics[width=0.9\linewidth]{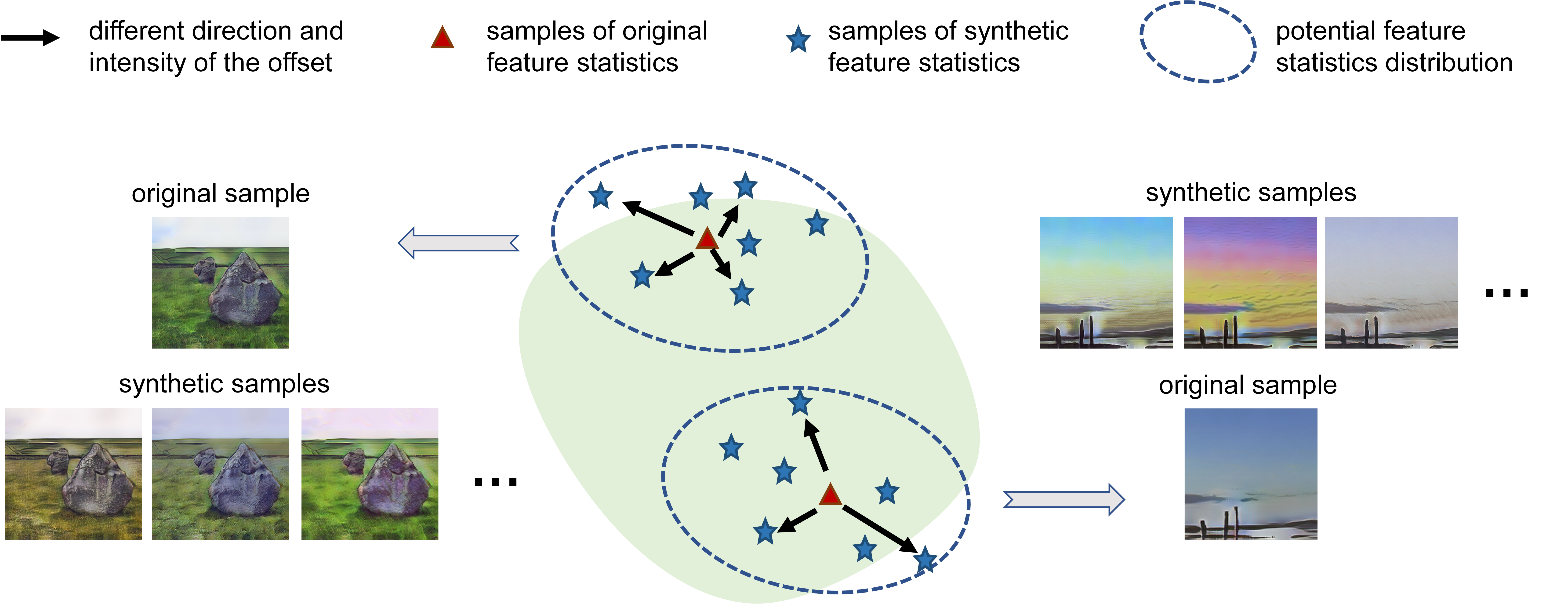}
\caption{
The visualization of reconstructed samples with synthesized feature statistics, using a pre-trained style transfer auto-encoder \cite{adain}. The illustration of the feature statistics shifts, which may vary in both intensity and direction (\textit{i.e.}, different offsets in the vector space of feature statistics). We also show images of ``new'' domains generated by manipulating feature statistic shifts with different directions and intensities. Note these images are for visualization only, rather than feeding into the network for training.
}
\label{fig:extra}
\end{center}
\vspace{-0.5cm}
\end{figure*}


\subsection{Modeling Domain Shifts with Uncertainty}
\label{sec: uncertainty modeling}

Since the underlying test domain is unforecastable and can be arbitrary, the uncertain feature statistic shifts may vary in both direction and intensity in vector space. Therefore, how to properly model the domain shifts becomes an important task for tackling the challenge of the domain generalization problem.


Considering the uncertainty and randomness of domain shifts, it is promising to employ the methods of “uncertainty” to treat the “uncertainty” of domain shifts. In this paper, we propose a novel method by \textit{modeling} \textbf{\emph{D}}\textit{omain}  \textbf{\emph{S}}\textit{hifts with}  \textbf{\emph{U}}\textit{ncertainty} (DSU). Instead of treating each feature statistic as a deterministic value measured from the learned feature, we hypothesize that the distribution of each feature statistic, after considering potential uncertainties, follows a multi-variate Gaussian distribution. This means each feature statistic has a probabilistic representation drawn from a certain distribution, \textit{i.e.}, the feature statistics mean and standard deviation follow $\mathcal{N}(\mu,\Sigma_{\mu}^2)$ and $\mathcal{N}(\sigma,\Sigma_{\sigma}^2)$, respectively.
Specifically, the corresponding Gaussian distribution's center is set as each feature's original statistics, while the Gaussian distribution's standard deviation describes the uncertainty scope for different potential shifts. Through randomly sampling diverse synthesized feature statistics with the probabilistic approach, the models can be trained to improve the robustness of the network against statistics shifts.

\subsubsection{Uncertainty Estimation}


Taking the uncertainty of domain shifts into consideration, the uncertainty estimation in our method aims to depict the uncertainty scope of each probabilistic feature statistic. However, the distribution of the testing domain is unknown, which makes it challenging to obtain an appropriate variant range during training.

As shown in some generative-based studies \cite{shen2021closedform, ISDA}, the variances between features contain implicit semantic meaning and can provide important indications. For example, the directions with a large intensity of variances can imply their potential for more valuable semantic changes.
Inspired by this, we propose a simple yet effective non-parametric
method for uncertainty estimation, utilizing the variance of the feature statistics in mini-batch to provide reasonable instructions about variant scope: 
\begin{equation}
\label{Sigmamu}
            \Sigma^2_{\mu}(x)=\frac{1}{B}\sum\limits_{i=1}^{B}(\mu(x_i)-\mathbb{E}_{b}[\mu(x)])^2,
\end{equation}
\begin{equation}
\label{Sigmasigma}
    \Sigma^2_{\sigma}(x)=\frac{1}{B}\sum\limits_{i=1}^{B}(\sigma(x_i)-\mathbb{E}_{b}[\sigma(x)])^2,
\end{equation}
where $\Sigma_{\mu}(x)\in \mathbb{R}^C$ and $\Sigma_{\sigma}(x)\in \mathbb{R}^C$ represent the uncertainty estimation of the feature mean $\mu$ and feature standard deviation $\sigma$, respectively. 
The magnitudes of uncertainty estimation can reveal the possibility that the corresponding channel may change potentially. 
For example, if the uncertainty estimation in some channel is large, which might indicate this channel is rich in semantics and has a bigger possibility to change across different domains. 
Although the underlying distribution of the domain shifts is unpredictable, the uncertainty estimation based on the training observations can provide an appropriate and meaningful variation range for each feature channel, which does not harm model training but can simulate diverse and reasonable potential shifts.

\subsubsection{Probabilistic distribution of feature statistics}

Once the uncertainty estimation of each feature channel is obtained, the Gaussian distribution for probabilistic feature statistics can be established. 
To further explore the uncertainty, we adopt random sampling to exploit the randomness in the probabilistic representations. The new feature statistics, mean $\beta(x) \sim \mathcal{N}(\mu,\Sigma_{\mu}^2)$ and standard deviation $\gamma(x) \sim \mathcal{N}(\sigma,\Sigma_{\sigma}^2)$, can be randomly drawn from the corresponding distributions as:


\begin{equation}
\label{dsu_training1}
    \beta(x)=\mu(x)+\epsilon_{\mu}{\Sigma}_{\mu}(x),
    ~~~~~~\epsilon_{\mu}\sim\mathcal{N}(\mathbf{0},\mathbf{I}),
\end{equation}
\begin{equation}
\label{dsu_training2}
    \gamma(x)=\sigma(x)+ \epsilon_{\sigma}{\Sigma}_{\sigma}(x), ~~~~~~\epsilon_{\sigma}\sim\mathcal{N}(\mathbf{0},\mathbf{I}).
\end{equation}

Since the proposed method is a stochastic algorithm, we use the re-parameterization technique \cite{vae} to make the sampling operation differentiable. Then we can get Equation (\ref{dsu_training1},\ref{dsu_training2}), where $\epsilon_{\mu}$ and $\epsilon_{\sigma}$ both follow the standard Gaussian distribution. To trade off the strength of uncertainty modeling during training, we set a hyper-parameter $p$ that denotes the probability to apply it. By exploiting the given Gaussian distribution, random sampling can generate various new feature statistics information with different combinations of directions and intensities. 
As we model the feature statistics as uncertain distributions during training, the model is trained to be more robust to statistic shifts. It can be observed that the distribution gaps of feature statistics are narrowed by modeling uncertainty during training (shown in Fig. \ref{fig:shift}), achieving an implicit effect to be robust against domain shifts.

\subsection{Inference-time Adaptation}
\label{sec: inference-time adaptation}

Modeling uncertainty during training can provide a reasonable variant scope based on the observations and generate kinds of shifts with different strengths and directions to enhance the trained model's robustness to domain shifts. During inference, the trained model becomes static and will encounter different underlying test distributions, which are unpredictable and might have inevitable gaps to training observations. To enhance the generalization ability of the static trained model to various test data, we can establish an inference-time adaptation strategy based on the instance-wise relative shift to the training domain of instance-wise feature statistics. 

\begin{figure}[t]
\begin{center}
\setlength{\abovecaptionskip}{0.cm}
\setlength{\belowcaptionskip}{-0.cm}
\includegraphics[width=0.9 \linewidth]{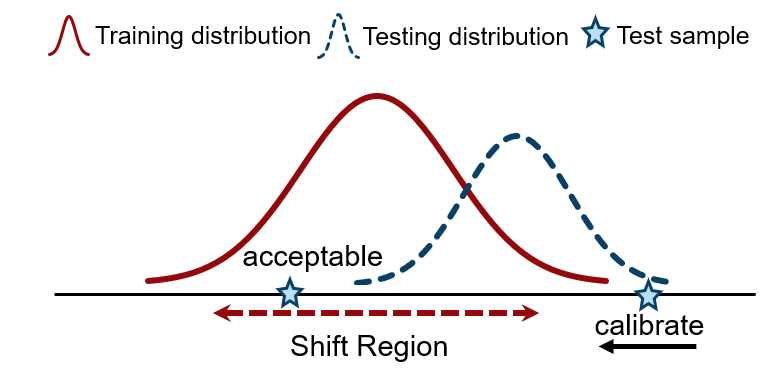}
\end{center}
\caption{Illustration of the inference-time adaptation strategy on instance-wise feature statistics. If the feature statistics of a test sample fall beyond the shift region of training domains, we can provide an adaptive calibrated intervention to help the model resist the perturbation. }
\label{fig:calibrate}
\vspace{-0.5cm}
\end{figure}

Therefore, we further propose an inference-time instance adaptation strategy to adaptively calibrate the feature statistic of test data, \textit{i.e.,} if the shift is beyond a given shift region related to the training domain and we can explicitly adapt the feature statistic towards the training distribution. Specifically, we define the shift region of feature statistics (mean and standard deviation ) as $\{\mathcal{S}_{\mu}$, $\mathcal{S}_{\sigma}\}$ to represent a certain variant scope of training distribution:

\begin{equation}
\label{accepetable1}
\mathcal{S}_{\mu}=[\overline{\mu}-n\overline{\Sigma}_{\mu}, \overline{\mu}+n\overline{\Sigma}_{\mu}],
\end{equation}
\begin{equation}
\label{accepetable2}
\mathcal{S}_{\sigma}=[\overline{\sigma}-n\overline{\Sigma}_{\sigma}, \overline{\sigma}+n\overline{\Sigma}_{\sigma}],
\end{equation}
where $\{\overline{\mu},\overline{\sigma}\}$ and $\{\overline{\Sigma}_{\mu},\overline{\Sigma}_{\sigma}\}$ denote the center and standard deviation of training domain feature statistics (mean and standard deviation) distribution. 
As illustrated in Fig. \ref{fig:calibrate}, if the feature statistic of a test sample exceeds the shift region and is away from the training distribution, it indicates the domain shift might be hard to estimate from training observations. Therefore, we can perform an explicit intervention and calibrate its statistics toward the training distribution with an offset. Conversely, if the feature statistic falls into the shift region $\{\mathcal{S}_{\mu}$, $\mathcal{S}_{\sigma}\}$, then we consider the model can be well trained to tolerate this perturbation and do not need intervention.
As a result, we raise an inference-time adaptation strategy as follows:
\begin{equation}
\label{dsu_testing1}
\beta(x)=\mu(x)+\omega\cdot\text{sign}(\overline{\mu}-\mu(x))\cdot\Phi(|\overline{\mu}-\mu(x)|-n\overline{\Sigma}_{\mu}),
\end{equation}
\begin{equation}
\label{dsu_testing2}    
\gamma(x)=\sigma(x)+\omega\cdot\text{sign}(\overline{\sigma}-\sigma(x))\cdot\Phi(|\overline{\sigma}-\sigma(x)|-n\overline{\Sigma}_{\sigma}),
\end{equation}
where $\Phi(x)\text{=max}(x,0)$ is the ReLU function to represent the adaptive offset, $\text{sign}(x)\in\{\text{-1,0,1}\}$ is the sign function to denote the relative position of the test sample, and $\omega$ controls the calibration strength towards the shift region. Through the Equation (\ref{dsu_testing1}, \ref{dsu_testing2}), we can on-the-fly achieve the adaptative calibration on the feature statistics of the test sample.

The raised inference-time adaptation strategy further bridge the gap between the static model and various unpredictable shifts. Therefore the trained model can adaptively calibrate the unpredictable test sample and be more robust to diverse domain shifts. We note that the inference-time strategy only calibrates the statistics using the individual test sample and the training domain distribution information, which does not need to adapt the model parameter or additional training, and thus is suitable and efficient for domain generalization.

\subsection{Overall Framework}
\label{sec: overall}
The implementation of our method is to replace the new feature statistics from the proposed method to achieve the transformation as in AdaIN \cite{adain}. Overall, the final form of the proposed method can be formulated as: 
\begin{equation}
    \hat{x}=\gamma_{\text{dsu}}(x) \frac{x-\mu(x)}{\sigma(x)} + \beta_{\text{dsu}}(x).
\end{equation}
During training, the new feature statistics $\beta_{\text{dsu}}(x)$ and $\gamma_{\text{dsu}}(x)$ are randomly sampled from the corresponding Gaussian distribution by the DSU module, as in Equation (\ref{dsu_training1},\ref{dsu_training2}) to model uncertainty, while during inference the feature statistics $\beta_{\text{dsu}}(x)$ and $\gamma_{\text{dsu}}(x)$ are calibrated by the adaptation strategy, as in Equation (\ref{dsu_testing1},\ref{dsu_testing2}) to perform calibration, together forming a system-level solution for improving domain generalization ability. We name this solution as DSU++ and the algorithm is described in Algorithm \ref{alg:dsu}.

The above operations can be flexibly integrated at arbitrary stages of the network as a plug-and-play module.
Benefiting from the proposed method, the model trained with uncertain feature statistics will gain better robustness against potential statistics shifts, and thus acquires a better generalization ability. Besides, the inference-time adaptation module can also play a role in adaptively calibrating the unforecastable domain shifts. The whole procedure is effective and efficient, requiring no additional trainable parameters during training and not needing to adapt the model parameters during inference.

\begin{algorithm}[htp]
\small 
\caption{The algorithm of our method DSU++.}
\label{alg:dsu}
\KwIn{
Intermediate feature $x\in \mathbb{R}^{B\times C\times H\times W}$, probability $p$ to forward this module\;
}
\KwOut{
Intermediate feature $\widehat{x} \in \mathbb{R}^{B\times C\times H\times W}$ after considering potential statistics shifts\;
}


Sample $p_0 \sim U(0,1)$\;
\uIf{$p_0<p$ \bf and Training}{
    Compute the channel-wise mean and standard deviation of each instance in a mini-batch\;
    
    $\mu(x)=\frac{1}{HW}\sum\nolimits_{h=1}^{H}\sum\nolimits_{w=1}^{W} x_{b,c,h,w}$,

     $\sigma^2(x)=\frac{1}{HW}\sum\nolimits_{h=1}^{H}\sum\nolimits_{w=1}^{W} (x_{b,c,h,w} - \mu(x))^2$.
    
    Uncertainty estimation on feature statistics\;
    
    $\Sigma_{\mu}^2(x)=\frac{1}{B}\sum\nolimits_{i=1}^{B}(\mu(x_i)-\mathbb{E}_{b}[\mu(x)])^2$,

    $\Sigma_{\sigma}^2(x)=\frac{1}{B}\sum\nolimits_{i=1}^{B}(\sigma(x_i)-\mathbb{E}_{b}[\sigma(x)])^2$.

    Compute the synthetic feature statistics randomly sampling from the given Guassian distributions\;

    $\beta(x)=\mu(x)+\epsilon_{\mu}{\Sigma}_{\mu}(x),
     ~~~~\epsilon_{\mu}\sim\mathcal{N}(\mathbf{0},\mathbf{I})$,
     
    $\gamma(x)=\sigma(x)+ \epsilon_{\sigma}{\Sigma}_{\sigma}(x), ~~~~\epsilon_{\sigma}\sim\mathcal{N}(\mathbf{0},\mathbf{I})$.
     
    Obtain the feature after considering potential statistics shifts\;
    
    $\widehat{x}=\gamma(x)\times \frac{x-\mu(x)}{\sigma(x)} + \beta(x)$.
     
     return the feature $\widehat{x}$ with uncertain feature statistics.
}
\uElseIf{\bf Testing}{
    \label{tta}
    $\beta(x)=\mu(x)+\omega\cdot\text{sign}(\overline{\mu}-\mu(x))\cdot\Phi(|\overline{\mu}-\mu(x)|-n\overline{\Sigma}_{\mu}),$\\
    $\gamma(x)=\sigma(x)+\omega\cdot\text{sign}(\overline{\sigma}-\sigma(x))\cdot\Phi(|\overline{\sigma}-\sigma(x)|-n\overline{\Sigma}_{\sigma}).$\\
    Obtain the feature after feature statistics calibration;
    $\widehat{x}=\gamma(x)\times\frac{x-\mu(x)}{\sigma(x)} + \beta(x)$.
    
    return the feature $\widehat{x}$ after the inference-time adaptation module.
}
\Else{
    adopt the original feature $x$ and skip this module.  
}
\end{algorithm}

\section{Theoretical Analysis}
\label{theory}
\newtheorem{theorem}{Theorem}[section]
We have presented the methodology in the above sections. In this section, we conduct a detailed theoretical analysis about the efficacy of our method, which is organized as follows.
We first present the required definitions in section \ref{sec:definitions}. Under the definitions, we provide an intensive analysis to show how our method reduces the risk brought by the domain gap, from the perspective of \textit{generalization error bound} in section \ref{sec: generalization error bound} and \textit{implicit regularization} in section \ref{sec:implicit regularization effect}, respectively. The detailed proof is in the appendix \ref{sec: appendix}.

\newtheorem{assumption}{Assumption}[section]
\newtheorem{definition}{Definition}[section]
\newtheorem{lemma}{Lemma}[section]
\subsection{Definitions} 
\label{sec:definitions}
In previous domain generalization problem setting \cite{cha2021swad},\cite{gao2022loss}, it is assumed that the source domain $\mathcal{S}$$:=$$\{\mathcal{S}_{i}\}_{i=1}^{S}$ and  $\mathcal{T}$$:=$$\{\mathcal{T}_i\}_{i=1}^{T}$ are constituted of
several available training domains and several unseen testing domains, where both $\mathcal{S}_i$ and $\mathcal{T}_i$ are training and testing distributions over input space $\mathcal{X}$, respectively. The goal of domain generalization problem is to improve performance on all possible domains, rather than simply a few target domains \cite{Robey2021ModelBasedDG}. In order to better formalize the potential target domains and meet the goal, we provide a probabilistic modeling approach, where an important modification is that we assume the data comes from an infinite number of domains determined by the environment variable $e$. Generally, the environment variable $e$ varies from changes in domain characteristics, \textit{i.e.,} texture, saturation, illumination, and background, etc. We define the target domain as $\mathcal{T}$$:=$$\Pi_e\cdot\mathcal{D}_e$, where $\Pi_e$ is a continuous random variable that represents the probability of the occurrence of environment $e$ and $\mathcal{D}_e$ is the domain of the corresponding environment. Naturally, the set of source domains is determined by several environments $e_i$ and we can describe the source training domain as a mixture distribution:
$\mathcal{S}$$:=$$\sum_{i=1}^{S}\Pi_{i}\cdot\mathcal{D}_{e_i}$, where $S$ is the total number of source domains and $\Pi_i$ is the mixing probability of domain $\mathcal{D}_{e_i}$. 

Similar to previous theoretical works, we consider the binary classification task for analysis. We assume that there exists a global labeling function $f:\mathcal{X}\rightarrow [0,1]$ corresponds to the probability that the label of $x$ is 1. Then, the optimized objective from the image level is to use a target function $\widetilde{f}:\mathcal{X}\rightarrow [0,1]$ to fit the ground-truth function $f$:

\begin{equation}
    \text{arg }\mathop{\text{min}}_{\theta}\mathbb{E}_e(\Pi_e\cdot \mathbb{E}_{x\sim\mathcal{D}_{e}}l(\widetilde{f}(x,\theta),f(x))),
\end{equation}
where $\widetilde{f}(·; \theta)$ is the target function parameterized by $\theta$ and $l:[0,1]\times[0,1]\rightarrow R^{+}$ is the loss function. 

To conduct the analysis on the fitting ability of intermediate layers in network, it requires us to extend the objective from
the image level to feature level. 
Consider the top-k layers in the network: $h_k:\mathcal{X}\rightarrow \mathcal{Z}_k$ that maps the input from the image space to the corresponding feature space and we can naturally obtain a probability function in the feature space as follows.
\begin{definition}
\label{feature target funtion}
\textbf{(Target function in feature space \cite{NIPS2006_b1b0432c})} A representation function $h_k$ induces a distribution $\widetilde{\mathcal{D}}$ over $\mathcal{Z}_k$ and a target function $f_
k:\mathcal{Z}_k\rightarrow [0,1]$ as follows:
\begin{equation}
\begin{split}
    &Pr_{\widetilde{\mathcal{D}}}(A)=Pr_{\mathcal{D}}(h_k^{-1}(A)), \\
    f&_{k}(z)=E_{\mathcal{D}}[f(x)|h_k(x)=z].
\end{split}
\end{equation}
for any $A\subset Z$ such that $h_k^{-1}(A)$ is measurable in $\mathcal{D}$. $Pr_X(x)$ denotes the probability of $x$ in distribution $X$. $\widetilde{\mathcal{D}}$ denote the feature distributions obtained by mapping the image distribution of $\mathcal{D}$ into feature space. Similarly, The source and target feature distributions $\widetilde{\mathcal{S}}, \widetilde{\mathcal{T}}$ can be obtain from mapping $\mathcal{S}, \mathcal{T}$ into feature space. 
\end{definition}

Through the Definition \ref{feature target funtion}, the training objective of domain generalization problem can be extended from the image level to the feature level, which becomes to find a predictor in feature space $\widetilde{f}_k:\mathcal{Z}_k\rightarrow [0,1]$ that agrees with $f_k$ on target domain: 
\begin{equation}
\begin{split}
    &\quad\,\text{arg }\mathop{\text{min}}_{\theta}\mathbb{E}_e(\Pi_e\cdot \mathbb{E}_{x\sim\mathcal{D}_{e}}l(\widetilde{f}_k(h_k(x),\theta),f(x))) \\
    &=\text{arg }\mathop{\text{min}}_{\theta}\mathbb{E}_e(\Pi_e\cdot \mathbb{E}_{z\sim\widetilde{\mathcal{D}_{e}}}l(\widetilde{f}_k(z,\theta),f_k(z))).
\end{split}    
\end{equation}
\par  For convenience, we denote the error that the predictor $\widetilde{f}_k$ disagrees with $f$ on training distribution $\mathcal{S}$ as $\epsilon_{\mathcal{S}}(\widetilde{f}_k, f_k):=\mathbb{E}_{z\sim \widetilde{\mathcal{S}}}|\widetilde{f}_k(z)-f_k(z)|$. Similarly, $\epsilon_{\mathcal{T}}(\widetilde{f}_k, f_k)$  denotes the expected error of $\widetilde{f}_k$ on testing domain $\mathcal{T}$. The sum $\epsilon_{\mathcal{S}}(\widetilde{f}_k, f_k)+\epsilon_{\mathcal{T}}(\widetilde{f}_k, f_k)$ represents the disagreement of $\widetilde{f}_k$ and $f_k$ on source domain and target domain. Then, we can use the upper bound $\lambda$ of this sum to evaluate the approximation ability of the function class $\mathcal{H}$ as follows.
\begin{definition}
\label{close}
\textbf{($\lambda$-close \cite{NIPS2006_b1b0432c
})} For $\lambda>0$, a function $f:\mathcal{Z}_k\rightarrow [0,1]$ is $\lambda$-close to  a function class $\mathcal{H}$ if 
\begin{equation}
    \mathop{inf}\limits_{h\in H}[\epsilon_{\mathcal{S}}(h, f)+\epsilon_{\mathcal{T}}(h, f)]\leq \lambda
\end{equation}
\end{definition}
As is believed by many works \cite{NIPS2006_b1b0432c}, there is a function in $\mathcal{H}$ similar to the objective function $f_k$, which embodies $f_k$ is $\lambda$-close to the function class $\mathcal{H}$, and the $\lambda$ should be a small quantity due to powerful fitting ability of deep neural networks. This assumption is naturally acquiesced in this paper and will be used in the following analysis.
\subsection{Generalization Error Bound}
\label{sec: generalization error bound}
We conduct a theoretical analysis under the above definitions and obtain the feature-level \textit{generalization error bound} on target domain. From that, we theoretically provide a boarder view to understand how our method reduces the generalization risk brought by domain shift in both training and inference stages.
\begin{theorem}
\label{Generalization bound}
\textbf{(Generalization gap bound)} Consider the representation function $h_k:\mathcal{X}\rightarrow \mathcal{Z}_k$ and a Lipschitz continuous function class $\mathcal{H}$ with Lipschitz constant $L$. If $X = \{x_1$, . . . , $x_n\}$ is a set of samples drawn from $\mathcal{S}$-i.i.d labeled according to $f$. Then, for any $h \in \mathcal{H}$, the following bound holds with probability at least $1-\delta$$:$  
\begin{equation}
\label{inequality}
    \begin{split}
        \epsilon_{\mathcal{T}}(h)&\leq \hat{\epsilon}_{\mathcal{S}}(h)+2L\sum\limits_{i=1}^{S}\Pi_{i}\cdot \int_{e} \Pi_e\cdot \mathcal{W}(\widetilde{\mathcal{D}}_{e_i},\widetilde{\mathcal{D}}_e)   \,\mathsf{d}e \\
        &+\lambda+2\hat{\mathcal{R}}_X(\mathcal{H'})+3\sqrt{\frac{log(\frac{2}{\delta})}{2n}}.
    \end{split}
\end{equation}
where $\hat{\epsilon}_{\mathcal{S}}(h)=\frac{1}{n}\sum\limits_{i=1}^n |h(h_k(x_i))-f(x_i)|$ is the empirical risk, $\mathcal{W}(P_1, P_2)=\mathop{\text{inf}}\limits_{\gamma\sim \Pi(P_1, P_2)}\mathbb{E}_{(x,y)\sim\gamma}||x-y||_2$ is the Wasserstein distance to measure the distance between two domains, $\mathcal{H'}:=\{|h\circ h_k - f|\ |h\in \mathcal{H}\}$ and the $\hat{\mathcal{R}}_X$ is empirical Rademacher Complexity which we give a detailed introduction in appendix \ref{rademacher}. Proof is in the Appendix \ref{Appendix: generalization error bound}.
\end{theorem}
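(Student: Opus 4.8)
The plan is to derive the bound by decomposing the target error through a chain of triangle-style inequalities, then controlling each piece with a standard generalization-theory tool. First I would pass to the feature space using Definition \ref{feature target funtion}, so that $\epsilon_{\mathcal{T}}(h)$ becomes an expectation of $|h\circ h_k - f_k|$ under $\widetilde{\mathcal{T}} = \int_e \Pi_e\cdot\widetilde{\mathcal{D}}_e\,\mathsf{d}e$. The key decomposition is $\epsilon_{\mathcal{T}}(h) \le \epsilon_{\mathcal{S}}(h) + |\epsilon_{\mathcal{T}}(h) - \epsilon_{\mathcal{S}}(h)|$, and since $\mathcal{S} = \sum_{i=1}^S \Pi_i\cdot\mathcal{D}_{e_i}$ and $\mathcal{T} = \int_e \Pi_e\cdot\mathcal{D}_e\,\mathsf{d}e$, I would write the domain-gap term as a double mixture $\sum_{i=1}^S \Pi_i \int_e \Pi_e \,|\,\epsilon_{\widetilde{\mathcal{D}}_e}(h,f_k) - \epsilon_{\widetilde{\mathcal{D}}_{e_i}}(h,f_k)\,|\,\mathsf{d}e$. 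The integrand compares the same function $g := |h\circ h_k - f_k|$ (or rather $|h\circ h_k - f|$ lifted appropriately) evaluated under two feature distributions.

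Next I would bound each such difference of expectations by the Wasserstein-$1$ distance. This is the Kantorovich–Rubinstein duality step: for a function $g$ that is Lipschitz with constant $L'$, $|\mathbb{E}_{z\sim P_1}g(z) - \mathbb{E}_{z\sim P_2}g(z)| \le L'\cdot\mathcal{W}(P_1,P_2)$. Here $g$ is the composition of the $L$-Lipschitz $h\in\mathcal{H}$ with $h_k$ together with $f$; I would absorb the relevant Lipschitz constants so the effective constant matches the factor $2L$ in the statement (the factor $2$ presumably accounting for the two-sided nature or for bounding $|h\circ h_k - f|$ via both $h$ and the labeling function — I would need to track this carefully, but the Lipschitz class assumption on $\mathcal{H}$ is exactly what makes it go through). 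This yields the $2L\sum_i \Pi_i\int_e \Pi_e\,\mathcal{W}(\widetilde{\mathcal{D}}_{e_i},\widetilde{\mathcal{D}}_e)\,\mathsf{d}e$ term.

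Then I would handle the remaining gap between the true source risk $\epsilon_{\mathcal{S}}(h)$ and the empirical risk $\hat{\epsilon}_{\mathcal{S}}(h)$ via a uniform convergence / Rademacher complexity argument applied to the class $\mathcal{H}' = \{|h\circ h_k - f| : h\in\mathcal{H}\}$: with probability at least $1-\delta$, $\sup_{h}(\epsilon_{\mathcal{S}}(h) - \hat{\epsilon}_{\mathcal{S}}(h)) \le 2\hat{\mathcal{R}}_X(\mathcal{H}') + 3\sqrt{\log(2/\delta)/(2n)}$, which is the standard symmetrization-plus-McDiarmid bound (the constant $3$ and the $\sqrt{\log(2/\delta)/2n}$ come from applying the bounded-differences inequality twice, once for the expectation and once for the Rademacher average). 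Finally, the $\lambda$ term enters because $h$ need only approximately agree with $f_k$: invoking Definition \ref{close}, the best-in-class error on source plus target is at most $\lambda$, which lets me replace comparisons against $f_k$ by comparisons against the chosen near-optimal $h$ at the cost of $\lambda$. Assembling these four contributions gives inequality (\ref{inequality}).

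The main obstacle I anticipate is bookkeeping the Lipschitz constants and the passage through the feature map $h_k$ cleanly: the Wasserstein bound needs the integrand $g$ to be genuinely Lipschitz on $\mathcal{Z}_k$, but $g$ involves both $h$ (Lipschitz by hypothesis with constant $L$) and the pulled-back labeling function $f_k$, whose regularity is not directly assumed — so I expect the argument must either fold $f$ into the $\lambda$-close comparison first (so that only $h$-differences are measured by $\mathcal{W}$) or rely implicitly on smoothness of $f_k$. Reconciling this with the exact constant $2L$ in the statement, and making sure the mixture-over-environments integral $\int_e$ interchanges correctly with expectations and with the infimum defining $\mathcal{W}$, is the delicate part; the Rademacher and $\lambda$-close pieces are comparatively routine.
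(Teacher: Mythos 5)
Your overall strategy matches the paper's: a triangle-inequality decomposition, the mixture representations of $\mathcal{S}$ and $\mathcal{T}$ over environments, Kantorovich--Rubinstein duality to convert cross-domain error gaps into Wasserstein distances, the $\lambda$-closeness of $f_k$ to $\mathcal{H}$, and the standard Rademacher/McDiarmid bound on $\mathcal{H}'$ for the empirical-vs-true source risk. However, the one step you flagged as delicate is precisely where your stated decomposition, taken literally, breaks: you propose to bound $|\epsilon_{\widetilde{\mathcal{D}}_e}(h,f_k)-\epsilon_{\widetilde{\mathcal{D}}_{e_i}}(h,f_k)|$ by $\mathcal{W}(\widetilde{\mathcal{D}}_{e_i},\widetilde{\mathcal{D}}_e)$, but the integrand $|h\circ h_k - f_k|$ involves $f_k$, for which no Lipschitz (or any regularity) assumption is available, so Kantorovich--Rubinstein does not apply to it. The paper resolves this by taking the first of your two escape routes, and it does so at the very start rather than at the end: it defines $h^*=\text{arg}\min_{h'\in\mathcal{H}}(\epsilon_{\mathcal{S}}(h',f_k)+\epsilon_{\mathcal{T}}(h',f_k))$ and decomposes
\begin{equation}
\epsilon_{\mathcal{T}}(h)\;\leq\;\epsilon_{\mathcal{T}}(h^*,f_k)+\epsilon_{\mathcal{T}}(h^*,h)\;\leq\;\epsilon_{\mathcal{T}}(h^*,f_k)+\epsilon_{\mathcal{S}}(h^*,h)+\big|\epsilon_{\mathcal{T}}(h^*,h)-\epsilon_{\mathcal{S}}(h^*,h)\big|,
\end{equation}
so that the cross-domain comparison is applied only to the disagreement $\epsilon_{\mathcal{D}}(h^*,h)=\mathbb{E}_{z\sim\mathcal{D}}|h^*(z)-h(z)|$ between two members of $\mathcal{H}$. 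Since both $h$ and $h^*$ are $L$-Lipschitz, $|h-h^*|$ is $2L$-Lipschitz, and the paper's Lemma in the appendix gives $|\epsilon_{\mathcal{D}_1}(h^*,h)-\epsilon_{\mathcal{D}_2}(h^*,h)|\leq 2L\,\mathcal{W}(\mathcal{D}_1,\mathcal{D}_2)$; this, not any "two-sidedness," is the origin of the factor $2L$. The proof then uses $\epsilon_{\mathcal{S}}(h^*,h)\leq\epsilon_{\mathcal{S}}(h^*,f_k)+\epsilon_{\mathcal{S}}(h,f_k)$, collects $\epsilon_{\mathcal{T}}(h^*,f_k)+\epsilon_{\mathcal{S}}(h^*,f_k)\leq\lambda$, and finishes with the Rademacher bound on $\mathcal{H}'$ exactly as you describe. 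So your proposal is salvageable, but only after reorganizing it around $h^*$ so that $f_k$ never appears inside the Wasserstein step; as written, that step is a genuine gap rather than mere bookkeeping.
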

The above introduced theorem shows that the generalization error on the target
domain $\epsilon_{\mathcal{T}}(h)$ is bounded by five terms: The first term is the empirical training loss $\epsilon_{\widetilde{\mathcal{S}}}(h)$ optimized as the objective function of ERM principle \cite{erm}, which can be optimized to be relative small. The second term is Wasserstein distance between source domain and target domain, which is well-known as the Earth Mover's Distance for the optimal cost needed to transport the mass from one distribution to another. The third term $\lambda$ is a small quantity to represent the disagreement as defined in Definition \ref{close}. The fourth term is empirical Rademacher Complexity of function class $\mathcal{H'}$. The fifth term is a constant associated to the degree of confidence and the number of training samples. 

Because other terms are relatively small or constant, Theorem \ref{Generalization bound} provides guarantee that the generalization error on target domain can be controlled by the Wasserstein distance between source domain and target domain. Through decreasing the term of Wasserstein distance, the generalization risk can be reduced from lowering the upper bound in the right hand of Equation (\ref{inequality}).
Empirically, if two domains obey Gaussian distribution ${\displaystyle X\sim{\mathcal {N}}(\mu_{1},\Sigma_{1})}$ and ${\displaystyle  Y\sim{\mathcal {N}}(\mu_{2},\Sigma_{2})}$. Then, the Wasserstein distance between two distributions is shown to be correlated to statistic discrepancy as follows:
\begin{equation}
    \begin{split}
        \displaystyle \displaystyle \mathcal{W}(X, Y)^{2}&=\|\mu_{1}-\mu_{2}\|_{2}^{2} \\
        &+\mathrm {Tr}{\bigl (}\Sigma_{1}+\Sigma_{2}-2{\bigl (}\Sigma_{2}^{\frac{1}{2}}\Sigma_{1}\Sigma_{2}^{\frac{1}{2}}{\bigr )}^{\frac{1}{2}}{\bigr )},
    \end{split}
\end{equation}
which implies that reducing the statistical discrepancy between domains can achieve the goal of reducing the Wasserstein distance between domains and thus the upper bound of the generalization error on the target domain in Equation (\ref{inequality}) can be theoretically decreased.

Recalling that we model the uncertainty of the statistic distribution during training to model the potential various domain shifts, which enhances the model's robustness to statistic shifts and implicitly reduces the distance between the source domain and the unseen domain. In the inference stage, the method performs calibration for the data whose statistics deviate from the source domain to explicitly reduce the distance. Therefore, we can obtain theoretical guarantees for the proposed methods to act on reducing the generalization risk, supported by the introduced framework of generalization error bound in feature space.
\subsection{Implicit Regularization of DSU}
\label{sec:implicit regularization effect}
The above generalization error bound provides a general perspective to understand the efficacy of our method. Moreover, regarding the proposed method DSU as a stochastic algorithm, it is important but challenging to uncover how it influences the training process. To this end, we conduct a specialized analysis to reveal its working mechanism and theoretically prove that DSU can bring an implicit regularization effect for the training dynamic process under some assumptions. To simplify the analysis, we conduct the study on the widely used regression problem. 

\begin{theorem}
\label{Implicit regularization}

Consider the classic regression problem with a family $\mathcal{F}$ of MLP functions $f_{w, b}(x) = w\cdot x+b$, where $x\in R^{C\times N}$ is the output of k-th layer in the network, $w\in R^{C\times N}$ and $b\in R$ are the affine coefficients, $w\cdot x$ represents the dot product operation. The loss function is $R(f) = \frac{1}{n}\sum_{i=1}^{n}(f(x_i)-y_i)^2 $ and we have the expectation of loss function after the probabilistic uncertainty modeling in k-th layer as:
\begin{equation}
\begin{split}
    \mathbb{E}[R(f)] &= \frac{1}{n}\sum_{i=1}^{n}\mathbb{E}_{\epsilon_{\sigma}, \epsilon_{\mu}}[(w\cdot ((\sigma(x_i)+ \epsilon_{\sigma}{\Sigma}_{\sigma}(x))\frac{x_i-\mu(x_i)}{\sigma(x_i)} \\
    &+\mu(x_i)+\epsilon_{\mu}{\Sigma}_{\mu}(x))+b-y_i)^2]. \\
\end{split}
\end{equation}
We prove that it equals to the following equation, detailed proof can be found in the Appendix \ref{Appendix: Implicit regularization}:
\begin{equation}
\label{regression_exp}
\begin{split}
    \mathbb{E}[R(f)] &= \frac{1}{n}\sum_{i=1}^{n}(f(x_i)-y_i)^2+ 
    \sum_{i=1}^{C} (\Sigma_{\mu})_i^2\cdot ||w_i||_2^{2}  \\ &+\frac{1}{n}
    \sum_{i=1}^{C}(\Sigma_{\sigma})_i^2\cdot\sum_{j=1}^{n}||w_i||_2^{2}\cdot \langle w_i,(\frac{x_j- \mu (x_j)}{\sigma (x_j)})_i\rangle^2,
\end{split}
\end{equation}
where $(\cdot)_i$ represents the i-th channel, the expected value of a random variable X is denoted by $\mathbb{E}[X]$, $\langle x, y\rangle$ represents the cosine distance. The notifications $\Sigma_{\sigma}$, $\Sigma_{\mu}$, $\epsilon_{\sigma}$, and $\epsilon_{\mu}$ are same as previously defined in Equation (\ref{Sigmamu}, \ref{Sigmasigma}, \ref{dsu_training1}, \ref{dsu_training2}).
\end{theorem}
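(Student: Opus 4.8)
The plan is to turn the stochastic loss into a deterministic one by exploiting the affine form of $f_{w,b}$ together with the low-order moments of the injected noise; no concentration inequality or bound is needed, only an exact second-moment computation. The first and most important step is to simplify the perturbed feature. Writing $\tilde{x}_i := \frac{x_i-\mu(x_i)}{\sigma(x_i)}$ for the channel-wise normalized activations, the DSU output expands as $(\sigma(x_i)+\epsilon_\sigma\Sigma_\sigma(x))\,\tilde{x}_i + \mu(x_i)+\epsilon_\mu\Sigma_\mu(x) = \big(\sigma(x_i)\tilde{x}_i+\mu(x_i)\big) + \epsilon_\sigma\Sigma_\sigma(x)\,\tilde{x}_i + \epsilon_\mu\Sigma_\mu(x) = x_i + \epsilon_\mu\Sigma_\mu(x) + \epsilon_\sigma\Sigma_\sigma(x)\,\tilde{x}_i$, since the deterministic part collapses back to $x_i$. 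Hence, as seen by the affine layer, DSU merely adds a structured zero-mean perturbation to its input: a per-channel shift $\epsilon_\mu\Sigma_\mu$ (constant across the coordinates of each channel) plus a per-channel scaling noise $\epsilon_\sigma\Sigma_\sigma$ modulated by $\tilde{x}_i$. Throughout, $\Sigma_\mu(x)$ and $\Sigma_\sigma(x)$ are mini-batch statistics and are treated as constants with respect to $\mathbb{E}_{\epsilon_\mu,\epsilon_\sigma}$.

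Next I would substitute this into the residual and expand the square. By affinity, $f_{w,b}(\widehat{x}_i)-y_i = \big(f_{w,b}(x_i)-y_i\big) + B^{(i)}_\mu + B^{(i)}_\sigma$, where $B^{(i)}_\mu$ is linear in $\epsilon_\mu$ with channel decomposition $\sum_c (\Sigma_\mu)_c\,\epsilon_{\mu,c}\,g_c(w_c)$ for a linear functional $g_c$ of the $c$-th channel $w_c$, and $B^{(i)}_\sigma = \sum_c(\Sigma_\sigma)_c\,\epsilon_{\sigma,c}\,\langle w_c,\tilde{x}_{i,c}\rangle$. Squaring gives the original squared residual, a term linear in $(\epsilon_\mu,\epsilon_\sigma)$, and a quadratic form in $(\epsilon_\mu,\epsilon_\sigma)$. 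Taking $\mathbb{E}_{\epsilon_\mu,\epsilon_\sigma}$ and using $\mathbb{E}[\epsilon_\mu]=\mathbb{E}[\epsilon_\sigma]=0$ kills the linear term; using mutual independence of $\epsilon_\mu,\epsilon_\sigma$ together with $\mathbb{E}[\epsilon_{\mu,c}\epsilon_{\mu,c'}]=\mathbb{E}[\epsilon_{\sigma,c}\epsilon_{\sigma,c'}]=\delta_{cc'}$ kills every mixed and off-diagonal term in the quadratic form, leaving the diagonal contributions $\sum_c(\Sigma_\mu)_c^2\,g_c(w_c)^2$ and $\sum_c(\Sigma_\sigma)_c^2\,\langle w_c,\tilde{x}_{i,c}\rangle^2$. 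Averaging over $i=1,\dots,n$ and collecting then yields identity (\ref{regression_exp}): the $\Sigma_\mu$ contribution carries no $\tfrac1n\sum_j$ because it is independent of the sample index, whereas the $\Sigma_\sigma$ contribution does, and a final rewriting — e.g. using that $\|\tilde{x}_{j,c}\|_2$ is a fixed constant to pass from the plain inner product to the normalized/cosine form — places it in the stated shape.

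The interpretation then follows immediately: both extra terms are nonnegative and weighted by the channel-wise uncertainty estimates $(\Sigma_\mu)_c^2$ and $(\Sigma_\sigma)_c^2$; the first behaves like an $\ell_2$ (weight-decay-type) penalty on $w$, and the second discourages over-alignment of $w$ with the normalized activations, so $\mathbb{E}[R(f)]$ equals the ERM objective plus a data-dependent implicit regularizer that switches off precisely when all uncertainty estimates vanish. The main obstacle is not analytical but is the combinatorial bookkeeping: keeping the channel index ($C$) distinct from the within-channel coordinate index ($N$) while the per-channel statistics broadcast over coordinates, and being scrupulous about exactly which noise cross-terms survive the expectation — i.e., correctly invoking the zero-mean, unit-variance, mutually-independent structure of $\epsilon_\mu$ and $\epsilon_\sigma$ — since a single mishandled index changes the precise form of the regularizer.
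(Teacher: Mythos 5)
Your proposal follows essentially the same route as the paper's appendix proof: collapse the DSU transform to $x_i$ plus a zero-mean additive perturbation ($\epsilon_\mu\Sigma_\mu$ broadcast per channel and $\epsilon_\sigma\Sigma_\sigma$ modulating the normalized activations), kill the linear and cross terms by zero mean, unit variance, and independence of $\epsilon_\mu,\epsilon_\sigma$, keep the diagonal second moments, and convert the surviving channel-wise inner products to the stated $\|w_i\|_2^2$--cosine form using the fixed norm of the normalized channels. The argument and bookkeeping match the paper's, so no further comparison is needed.
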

Equation (\ref{regression_exp}) demonstrates that the probabilistic approach is expected to bring an implicit regularization effect to the training process. Compared to the vanilla version, DSU introduces a penalty on the channel-wise variances of source domain feature statistics, $(\Sigma_{\sigma})_i$ and $(\Sigma_{\mu})_i$, according to the coefficients $||w_i||_2$ and $\langle w_i,(\frac{x_j- \mu (x_j)}{\sigma (x_j)})_i\rangle^2$. Specifically, the coefficients are proportional to the weight of the corresponding channel, meaning that channels with large importance can get intensive constraints on variances. 

It is shown that, as we produce diverse domain perturbations during training and enlarge the training scope with uncertainty, the training process in turn obtains a penalty to reduce the variances of feature statistics. Benefiting from the dynamic enlarging-and-reduction training process, the model can be trained to gain robustness to the domain perturbation, which implicitly
aligns the feature statistics of multiple domains and helps the model encode better domain-invariant features across different domains. Also, it has the benefit of bringing different regularization effects in each step from the stochastic way. As a result, the training domains can become more compact and the domain distance between source and target domain is implicitly reduced, the empirical analysis in Section \ref{analysis} verifies our conclusion as well. 



\section{Experiments}

In order to verify the effectiveness of the proposed method in improving the generalization ability of networks, we conduct experiments on a wide range of tasks, including image classification, semantic segmentation, instance retrieval, and pose estimation, where the training and testing sets have different cases of distribution shifts, such as style shift, synthetic-to-real gap, scenes change, and different animal kinds. In the following sections, we denote DSU++ as the experiment results equipped with both DSU and the inference-time adaptation strategy. 

\subsection{Generalization on Multi-domain Classification}
\textbf{Setup and Implementation Details:} We evaluate the proposed method on PACS \cite{pacs_2017_ICCV}, a widely-used benchmark for domain generalization with four different styles: Art Painting, Cartoon, Photo, and Sketch. The implementation follows a \textit{leave-one-domain-out} protocol and ResNet18 \cite{resnet} is used as the backbone. The random shuffle version of MixStyle is adopted for fair comparisons, which does not use domain labels.

\begin{table}[h]
\small
\centering
\caption{Experiment results of PACS multi-domain classification task.}
\label{table:pacs}
\resizebox{\linewidth}{!}{
\begin{tabular}{l|c|cccc|c}
\toprule[1pt]
\multicolumn{1}{l|}{Method} & Reference & Art & Cartoon & Photo & Sketch & Ave. (\%)  \\ \hline
Baseline&- &74.3  &76.7  & 96.2  &68.7  & 79.0\\
Mixup \cite{zhang2018mixup}&ICLR 2018 & 76.8 & 74.9 & 95.8 & 66.6 & 78.5 \\
Manifold \cite{manifoldmixup}&ICML 2019 & 75.6 & 70.1 & 93.5  & 65.4 & 76.2\\
CutMix \cite{cutmix}&ICCV 2019 & 74.6 & 71.8 & 95.6 & 65.3 & 76.8 \\
RSC \cite{huangRSC2020} &ECCV 2020 & 78.9 &  76.9 & 94.1 & 76.8 & 81.7 \\
L2A-OT \cite{L2O} & ECCV 2020 & 83.3 & 78.2 & 96.2 & 73.6 & 82.8 \\
SagNet \cite{nam2021reducing}&CVPR 2021 &83.6 & 77.7 & 95.5 & 76.3 & 83.3 \\ 
pAdaIN \cite{Nuriel_2021_CVPR}&CVPR 2021 & 81.7 & 76.6 & 96.3 & 75.1 & 82.5 \\
MixStyle \cite{mixstyle}&ICLR 2021 & 82.3 & 79.0 & 96.3 & 73.8 & 82.8 \\
SFA \cite{li2021simple} & ICCV 2021 & 81.2 & 77.8 & 93.9 & 73.7 & 81.7 \\
EFDM \cite{efdm}&CVPR 2022 & 83.9 & 79.4 & \textbf{96.8} & 75.0 & 83.9 \\
\hline
DSU\cite{dsu}& ICLR 2022 & 83.6 & 79.6 & 95.8 & 77.6 & 84.1 \\
DSU++& this paper & \textbf{84.5} & \textbf{80.6} & 96.4 & \textbf{79.2} & \textbf{85.2} \\

\bottomrule[1pt]
\end{tabular}
}
\end{table}

\textbf{Experiment Results:} 
The experiments results, shown in Table \ref{table:pacs}, demonstrate our significant improvement over the baseline method, which shows our superiority to the conventional deterministic approach. 
Furthermore, our method also outperforms the competing methods, especially in Sketch, which has a divergent style to other domains and is more difficult. Compared to the internal operations on feature statistics, our method that provides diverse uncertain shifts on feature statistics is effective to improve network generalization ability against different domain shifts. With equipping the inference-time adaptation strategy, the performances of DSU++ can be further improved from 84.1 to 85.2, showing that training-time uncertainty modeling and inference-time adaptation can work together to help the model better deal with the domain shifts.
Photo has similar domain characteristics as ImageNet dataset and the slight drop might be due to the ImageNet pre-training (also discussed in \cite{randcov}). DSU augments the features and enlarges the diversity of the training domains. In contrast, the baseline method preserves more pre-trained knowledge from ImageNet thus tends to overfit the Photo style dataset benefiting from pre-training. 

\subsection{Generalization on Semantic Segmentation}
\textbf{Setup and Implementation Details:} Semantic segmentation, 
as a fundamental application for automatic driving, encounters severe performance declines due to scenario differences \cite{Haoran_2020_FADA}. GTA5 \cite{gtav} is a synthetic dataset generated from Grand Theft Auto 5 game engine and Synthia is a synthetic urban scene dataset. Cityscapes \cite{Cityscapes} is a real-world dataset collected from different cities in primarily Germany. To evaluate the cross-scenario generalization ability of segmentation models, we adopt synthetic GTA5 and Synthia for training while using real CityScapes for testing.
The experiments are conducted on FADA released codes \cite{Haoran_2020_FADA}, using DeepLab-v2 \cite{deeplab} segmentation network with ResNet101 backbone. Mean Intersection over Union (mIOU) of all object categories are used for evaluation.

\begin{table}[h]
\centering
\normalsize
\setlength{\tabcolsep}{4.5mm}
\caption{Experiment results of semantic segmentation from synthetic GTA5 and Synthia to real Cityscapes.}
\label{table:segmentation}
\resizebox{1.0\linewidth}{!}{
\begin{tabular}{l|c|c|c}

\toprule[1pt]
\multirow{2}{*}{Method} &\multirow{2}{*}{Reference}
& \multicolumn{1}{c|}{GTA5} 
& \multicolumn{1}{c}{Synthia}\\
 & & mIOU (\%)         & mIOU (\%) \\ \hline

Baseline & - & 37.0 & 32.4 \\
pAdaIN \cite{Nuriel_2021_CVPR}& CVPR 2021 & 38.3 & 32.3 \\
Mixstyle \cite{mixstyle}& ICLR 2021 & 40.3 &  33.2\\
EFDM \cite{efdm}& CVPR 2022 &  40.8&  32.6\\
\hline 
DSU \cite{dsu}& ICLR 2022& 43.1 & 33.9\\
DSU++ & this paper& \textbf{43.6} & \textbf{34.5} \\
\bottomrule[1pt]
\end{tabular}
}

\end{table}




\textbf{Experiment Results:} Table \ref{table:segmentation} shows the experiment results compared to related methods. As for a pixel-level classification task, improper changes in feature statistics might constrain the performances. The variants generated from our method are centered on the original feature statistics with different perturbations. These changes in feature statistics are mild for preserving the detailed information in these dense tasks. Meanwhile, DSU can take full use of the diverse driving scenes to generate diverse variants and the inference adaptation can further help the model deal with various unforecastable shifts, thus showing a significant improvement on mIOU by 6.6\%. We also provide the visualization result in Figure \ref{fig:seg1} to show its effectiveness. 

\begin{figure}[h]
\begin{center}
\setlength{\abovecaptionskip}{0.cm}
\setlength{\belowcaptionskip}{-0.cm}
\includegraphics[width=1.0\linewidth]{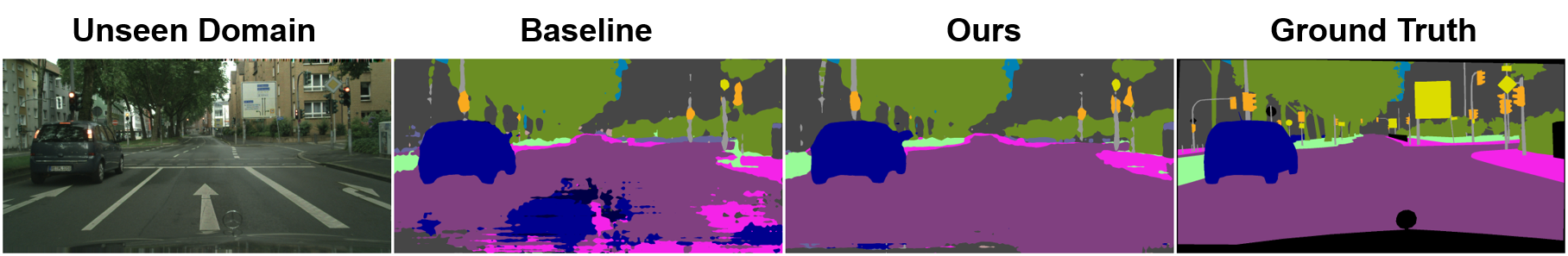}
\caption{The visualization on unseen domain Cityscapes with the model trained on synthetic GTA5. }
\vspace{-3pt}
\label{fig:seg1}
\end{center}

\end{figure}


\subsection{Generalization on Instance Retrieval}
\textbf{Setup and Implementation Details:} In this section, person re-identification (ReID), which aims at matching the same person across disjoint camera views, is used to verify the effectiveness of our method on the instance retrieval task. 
Experiments are conducted on the widely used DukeMTMC \cite{dukemtmc} and Market1501 \cite{market1501} datasets. The implementation is based on MMT \cite{mmt} released codes and ResNet50 is adopted as the backbone. Meanwhile, mean Average Precision (mAP) and Rank-1 (R1) precision are used as the evaluation criterions.

\begin{table}[h]
\setlength{\abovecaptionskip}{0.cm}
\setlength{\belowcaptionskip}{-0.cm}
\caption{Experiment results of instance retrieval on ReID dataset DukeMTMC and Market1501. A $\rightarrow$ B denotes models are trained on A while evaluated on B. For fair comparisons, we reproduce the experiments under the same framework. }
\label{table:reid}
\begin{center}
\resizebox{0.95\linewidth}{!}{
\begin{tabular}{l|c|cc|cc}
\toprule[1pt]
\multirow{2}{*}{Method} &\multirow{2}{*}{Reference}
& \multicolumn{2}{c|}{Market $\rightarrow$ Duke} 
& \multicolumn{2}{c}{Duke $\rightarrow$ Market}\\

  & & mAP (\%)           & R1 (\%)         & mAP (\%)            & R1 (\%) \\ \hline 
Baseline & - &25.8 & 42.3 & 26.7 & 54.7\\
pAdaIN \cite{Nuriel_2021_CVPR} & CVPR 2021 & 28.0 & 46.1 & 27.9 & 56.1 \\
MixStyle \cite{mixstyle}& ICLR 2021 & 28.2 & 46.7 & 28.1 & 56.6\\
EFDM \cite{efdm}& CVPR 2022 &  29.0&  47.8&  28.5& 57.1 \\
\hline
DSU \cite{dsu}&  ICLR 2022 & 32.0 & 52.0 & 32.4 & 63.7\\
DSU++ &  this paper & \textbf{33.3} & \textbf{54.4} & \textbf{33.6} & \textbf{65.9}\\

\bottomrule[1pt]
\end{tabular}
}
\end{center}
\end{table}

\textbf{Experiment Results:} ReID is a fine-grained instance retrieval task, where the subtle information of persons is important for retrieving an instance. MixStyle and pAdain rely on a reference sample to generate new feature statistics, which might introduce confounded information from the reference sample. Compared to them, our method does better in maintaining the original information and also has more variant possibilities. The experiment results are demonstrated in Table \ref{table:reid}. Our method achieves huge improvement compared to the baseline method and also outperforms the related methods by a big margin.

\begin{table}[h]
\caption{Experiments results of the intra-family and inter-family animal pose estimation performances. Bov., Cerv. and Equ. are short for the animal family of Bovidae, Cervidae, and Equidae, respectively.}
\centering
\label{pose}
\resizebox{1.0\linewidth}{!}{
\begin{tabular}{lc|cc|cc|cc}
\hline
\toprule[1pt]
\multicolumn{2}{l|}{Intra-family}             & \multicolumn{2}{c|}{Bov. to Bov.}  & \multicolumn{2}{c|}{Cerv. to Cerv.} & \multicolumn{2}{c}{Equ. to Equ.}\\ \hline
\multicolumn{1}{l|}{Method}   & Reference  & mAP          & $\text{AP}_{50}$    & mAP          & $\text{AP}_{50}$          & mAP          & $\text{AP}_{50}$   \\ \hline
\multicolumn{1}{l|}{Baseline} & -          & 76.4         & 96.7       & 61.0         & 87.7 & 67.0 & 86.1        \\ 
\multicolumn{1}{l|}{pAdaIN \cite{Nuriel_2021_CVPR}}      & CVPR 2021 & 76.4 & 95.9 & 62.9& 89.3& 67.2&86.2 \\
\multicolumn{1}{l|}{MixStyle \cite{mixstyle}}      & ICLR 2021 & 76.8 & 96.5 & 63.9& 90.0& 68.0 &87.2 \\
\multicolumn{1}{l|}{EFDM \cite{efdm}}      & CVPR 2022 & 77.2 & 96.3 & 63.7& 90.5& 67.5& 87.3\\ \hline
\multicolumn{1}{l|}{DSU \cite{dsu}}      & ICLR 2022  & 77.6         & 97.7              & 64.4        & 90.4   &  68.7 & 87.6      \\ 
\multicolumn{1}{l|}{DSU++}    & this paper &   \textbf{78.1} &  \textbf{98.2}  &\textbf{65.3}         & \textbf{90.9}    & \textbf{69.1} & \textbf{88.2}\\ \toprule[1pt]
\multicolumn{2}{l|}{Inter-family}          & \multicolumn{2}{c|}{Bov. to Cerv.} & \multicolumn{2}{c|}{Bov. to Equ.}   & \multicolumn{2}{c}{Cerv. to Equ.}\\ \hline
\multicolumn{1}{l|}{Method}   & Reference  & mAP          & $\text{AP}_{50}$         & mAP          & $\text{AP}_{50}$      &mAP          & $\text{AP}_{50}$        \\ \hline 
\multicolumn{1}{l|}{Baseline} & -          & 55.8         & 84.3        & 38.4         & 61.7     &  28.9 & 56.5   \\ 
\multicolumn{1}{l|}{pAdaIN \cite{Nuriel_2021_CVPR}}      & CVPR 2021 & 57.9 & 88.2 & 38.4& 62.9& 27.4&56.3 \\
\multicolumn{1}{l|}{MixStyle \cite{mixstyle}}      & ICLR 2021 & 58.6 & 87.4 &41.3 &68.4 & 31.7&59.8 \\
\multicolumn{1}{l|}{EFDM \cite{efdm}}      & CVPR 2022 & 59.5 & 90.2 & 41.2& 68.2 &  32.4& 57.1\\ \hline
\multicolumn{1}{l|}{DSU \cite{dsu}}      & ICLR 2022  & 60.0         & 91.7        & 42.0         & 69.3     & 33.7 & 60.9        \\ 
\multicolumn{1}{l|}{DSU++}    & this paper &           \textbf{61.1}         & \textbf{92.2}           &     \textbf{44.0}         &           \textbf{70.5}  &      \textbf{34.4} & \textbf{61.4}     \\ 
\bottomrule[1pt]
\end{tabular}
}
\end{table}

\subsection{Generalization on Pose Estimation}
\textbf{Setup and Implementation Details:} Pose estimation serves as an important task for vision understanding, which aims to identify the category and position of body keypoint. AP-10K \cite{ap10k} consists of 10015 images from different animal families and species, which is adopted in our experiment to evaluate the model generalization ability of spatial location-based pose estimation task. In this experiment, we adopt ResNet50 as the backbone and choose three mamma families with similar outlooks, \textit{i.e.,} Bovidae, Cervidae, and Equidae, to evaluate the intra-family and inter-family generalization performance.

\textbf{Experiment Results:} The experiment results are summarized in Table \ref{pose}, where we calculate the average score of all species in each family. As can be seen, the model trained on Bovidae shows performance drops when testing in another specie, Cervidae, even they both belong to the artiodactyls order and have similar outlooks. The performance obtains obvious gains equipped with the proposed method. Meanwhile, the intra-family results also demonstrate improvement due to the inherent difference between the different species even in the same family. By combining the inference-time adaptation strategy, the performance can be further improved.
As a result, the experiment result shows our effectiveness of improving generalization ability to both intra-family and inter-family.  

\section{Ablation Study}
In this section, we perform an extensive ablation study of the proposed two plug-and-play modules, including training time uncertainty modeling approach and the inference-time adaptation strategy, on classification (PACS) and segmentation task (GTA5 to Cityscapes) with models trained on ResNet. The effects of different inserted positions and hyper-parameter of the proposed method are analyzed below. 
\subsection{Ablation on Training-time Uncertainty Modeling}
\textbf{Effects of different inserted positions:} DSU can be a plug-and-play module to be readily inserted at any position. Here we name the positions of ResNet after first Conv, Max Pooling layer, 1,2,3,4-th ConvBlock as 0,1,2,3,4,5 respectively. As shown in Table \ref{table:positions}, no matter where the modules are inserted, the performances are consistently higher than the baseline method. The results show that inserting the modules at positions 0-5 would have better performances, which also indicates modeling the uncertainty in all training stages will have better effects. Based on the analysis, we plug the module into positions 0-5 in all experiments. 

\begin{table}[h]
\begin{center}
\setlength{\abovecaptionskip}{0.cm}
\setlength{\belowcaptionskip}{-0pt}
\caption{Effects of different inserted positions during training.}
\label{table:positions}
\resizebox{\linewidth}{!}{
\begin{tabular}{c|c|cccc}
\toprule[1pt]
Inserted Positions  & Baseline     & 0-3 & 1-4  & 2-5  & 0-5 \\
\hline
PACS &  79.0& 82.2 & 83.1 & 83.5 & 84.1 \\
GTA5 to Cityscapes & 37.0 & 41.1 & 40.9& 42.1 & 43.1 \\  
\toprule[1pt]
\end{tabular}
}
\end{center}
\vspace{-2pt}
\end{table}


\textbf{Effects of hyper-parameter:} The hyper-parameter of the probability $p$ is to trade off the strength of feature statistics augmentation. As shown in Figure \ref{fig:prob}, the results are not sensitive to the probability setup and the accuracy reaches the best results when setting $p$ as $0.5$, which is also adopted as the default setting in all experiments.
\begin{figure}[h]
\begin{center}
\setlength{\abovecaptionskip}{0.cm}
\setlength{\belowcaptionskip}{-0.cm}
\includegraphics[width=0.9\linewidth]{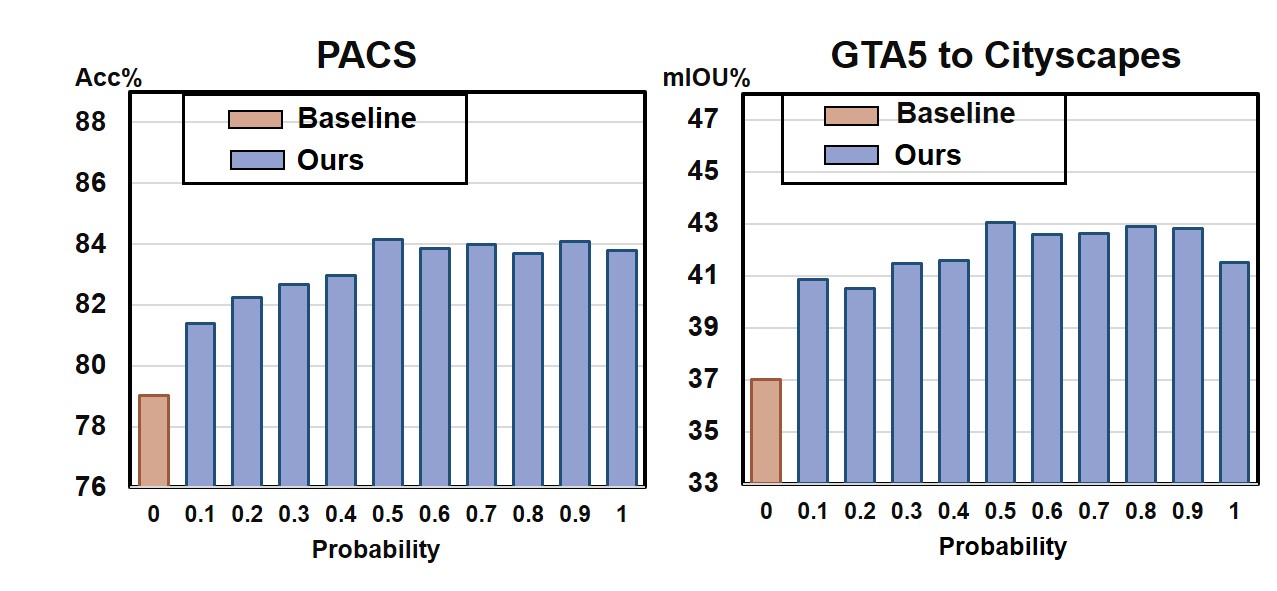}
\caption{The effects on the hyper-parameter probability during training.}
\label{fig:prob}
\end{center}
\vspace{-5pt}
\end{figure}








\subsection{Ablation on Inference-time Adaptation}
\textbf{Effects of different inserted positions:} 
Here we study the different inserted positions in Table \ref{table:positions_tta}. From the results, we can see the inserted position ranges from 0-5 will get better performance, which is consistent with DSU and shows the proposed method can be complementary with the proposed DSU to obtain further improvement.

\begin{table}[h]
\setlength{\abovecaptionskip}{0.cm}
\setlength{\belowcaptionskip}{-0.cm}
\begin{center}
\caption{Effects of different inserted positions of the inference-time adaptation module in DSU++.}
\label{table:positions_tta}
\resizebox{\linewidth}{!}{
\begin{tabular}{c|c|cccccc}
\toprule[1pt]
Inserted   Positions & DSU  & 0-3  & 1-4  & 2-5  & 3-4 & 0-5  \\ \hline
PACS                 & 84.1 & 85.1  & 85.0  &  84.8   &    84.9  &  85.1        \\
GTA5 to Cityscapes      & 43.1  & 43.2 & 43.3 & 43.5 & 43.4  & 43.6 \\

\toprule[1pt]
\end{tabular}}
\end{center}
\end{table}

\textbf{Effects of hyper-parameters:}
As shown in Fig \ref{fig:sigma and weight}, we ablate the calibration weight $\omega$ and the coefficient $n$ 
of shift region in inference-time adaptation. The hyper-parameter $\omega$ is the calibration weight to control the calibration power between the original statistic and the shift region. The statistics remain unchanged when $\omega$ becomes 0, and the statistics are all strictly rectified to the $n\Sigma$ position when $\omega$ becomes 1, which degrades the original statistics information and shows less effective results. Here, we choose $\omega$ to be 0.5 as the default setting, which can provide proper calibration effects as well as maintain its statistics information. 
\begin{figure}[h]
\begin{center}
\includegraphics[width=0.9\linewidth]{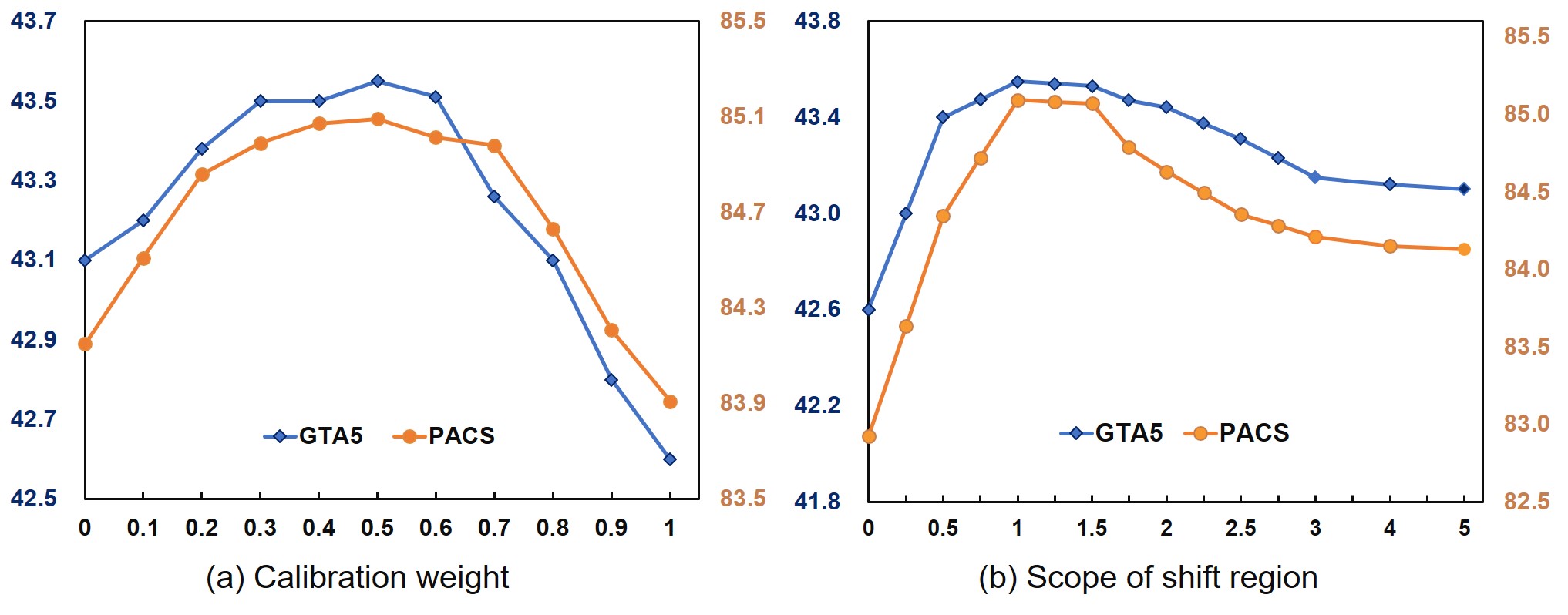}
\setlength{\abovecaptionskip}{0.cm}
\setlength{\belowcaptionskip}{0.cm}
\caption{The effects on the calibration weight $\omega$ (a) and the scope $n$ of the shift region (b).}
\label{fig:sigma and weight}
\end{center}
\vspace{-3pt}
\end{figure}

The hyper-parameter $n$ determines the scope of the shift region for test samples to perform adaptation or not. When the $n$ is larger, some out-of-distribution samples do not get calibrated and the gain will decline. When $n$ becomes 0, all test samples will be calibrated towards the center and the over-calibration might degrade instance-wise information. From the ablation result, we find that when setting $n$ to be 1 performs best, which is set to be the default setting.

\begin{figure*}[h]
\setlength{\abovecaptionskip}{0.cm}
\setlength{\belowcaptionskip}{-0.cm}
\begin{center}
\includegraphics[width=0.9\linewidth]{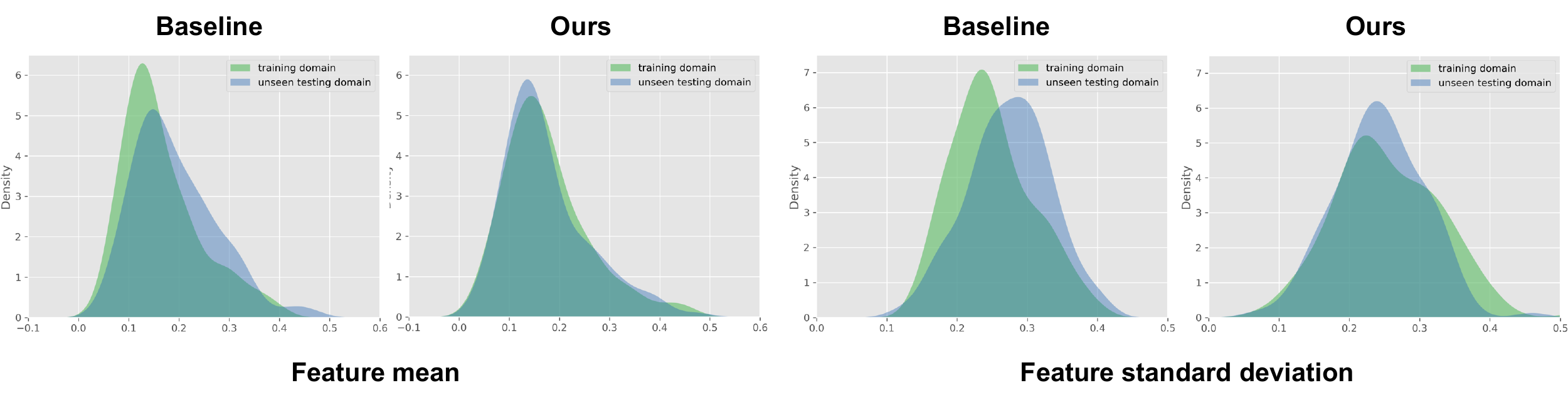}
\caption{Quantitative analysis on the shifts of feature statistics (mean and standard deviation) between training source domains and unseen testing domain, which demonstrates the model can be well trained by modeling uncertainty to gain better robustness against shifts.}
\label{fig:shift}
\end{center}
\vspace{-5pt}
\end{figure*}

\subsection{Ablation on Individual Modules}
We ablate the influences of individual modules in Table \ref{table:individual}. The results show that both the training-time and inference-time modules are both effective in improving the generalization performances. The training-time uncertainty modeling approach benefits the model to gain better generalization ability and thus obtains significant improvement compared to the baseline. Meanwhile, the inference-time strategy further provides an adaptive calibration on instance-wise feature statistics, which enhances the trained model to better deal with various testing shifts. As a result, the training-time uncertainty modeling and inference-time instance adaptation can complement each other and obtain a unified and powerful solution for improving network generalization ability.

\begin{table}[h]
\begin{center}
\setlength{\abovecaptionskip}{0.cm}
\setlength{\belowcaptionskip}{-0pt}
\caption{Effects of individual modules.}
\label{table:individual}
\resizebox{1.0\linewidth}{!}{
\begin{tabular}{l|cc|ccc}
\toprule[1pt]
Methods & \begin{tabular}[c]{@{}l@{}}Uncertainty\\ Modeling\end{tabular} & \begin{tabular}[c]{@{}l@{}}Inference-time\\ Adaptation\end{tabular} & PACS & Seg. & ReID \\
\hline
Baseline &  &  & 79.0 & 37.0 & 25.8 \\
$\text{Baseline}^{\dagger}$ & & \checkmark & 82.3 & 39.8 & 27.1\\  
DSU & \checkmark & & 84.1 & 43.1 & 32.0\\
DSU++ & \checkmark & \checkmark & 85.2 & 43.6 & 33.3 \\
\toprule[1pt]
\end{tabular}
}
\end{center}
\vspace{-3pt}
\end{table}

\section{Quantitative Analysis}\label{analysis}
In this section, we will provide the quantitative analysis of the proposed method on feature statistic discrepancy, domain distance, and semantic changes, respectively. 
\subsection{Robustness to Statistic Shift}
\label{inconsistency}
Quantitative experiments are conducted on PACS, where we choose Art Painting as the unseen testing domain and the rests (\textit{i.e.,} Cartoon, Photo, and Sketch) are used as training domains. To study the phenomena of feature statistic shifts, we capture the intermediate features after the second block in ResNet18 and measure the average feature statistics values of one category in the training and testing domain, respectively. To analyze the model robustness to statistic shift, we show the channel-level feature statistics discrepancy between source training and target testing domain in Fig. \ref{fig:channel_discrepancy}. 

\begin{figure}[h]
\begin{center}
\setlength{\abovecaptionskip}{0.cm}
\setlength{\belowcaptionskip}{-0.cm}
\includegraphics[width=0.95\linewidth]{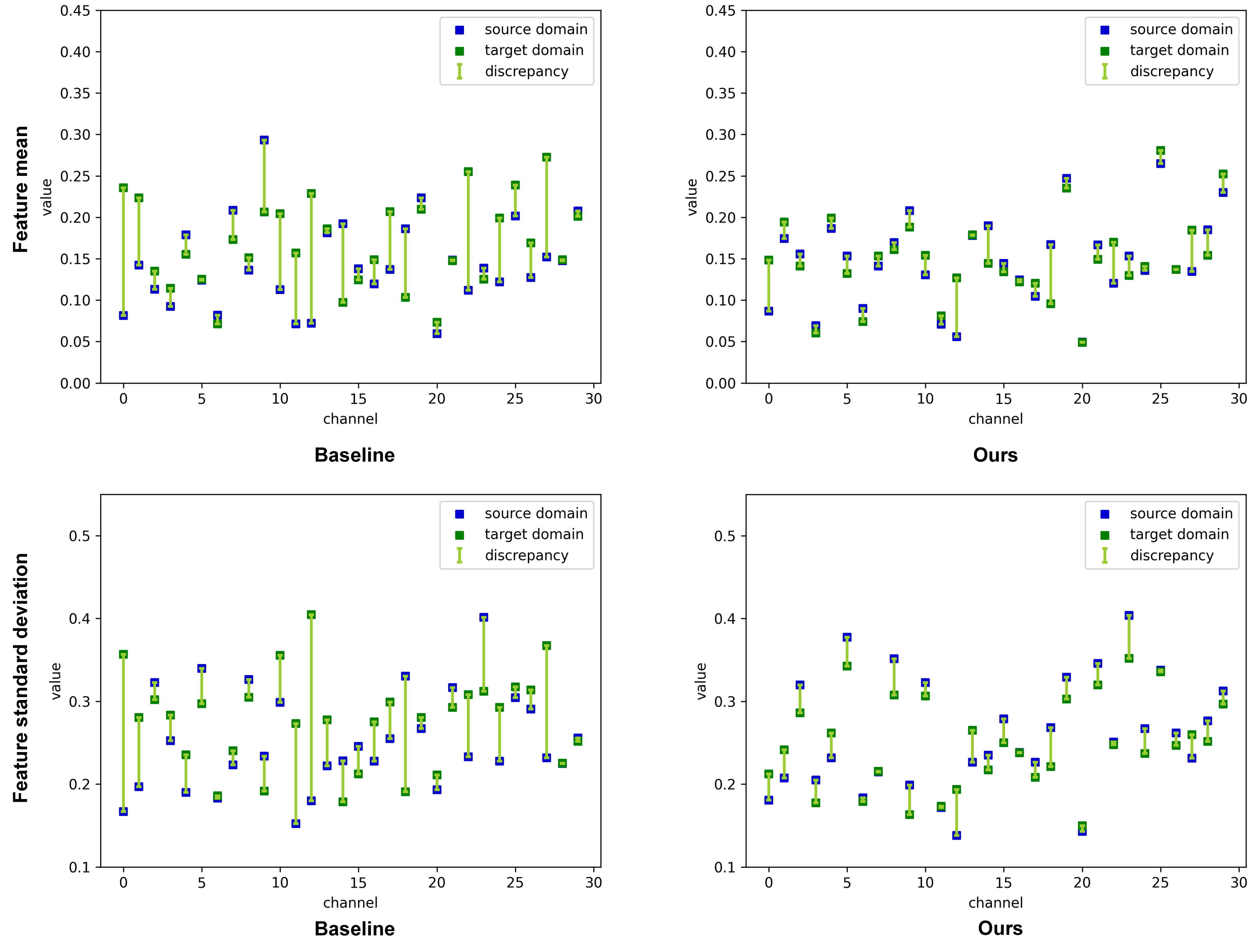}
\caption{The channel-level feature statistic discrepancy comparisons between baseline method and the proposed method DSU. We display the first 30 channels for better visualization.}
\label{fig:channel_discrepancy}
\end{center}
\vspace{-5pt}
\end{figure}

The visualization verifies the previous assumptions. First, it could be seen that our trained model demonstrates an obvious implicit calibration effect on feature statistics, which has a smaller discrepancy to the training domain. Second, the different channels might have different variant scopes and should have different importance to model potential domain shifts, which is consistent with the design of uncertainty estimation. Third, due to the unpredictable property of domain shifts, some channels might have unforeseeable shifts only based on training domain observations and will show an inevitable discrepancy, thus introducing the inference-time adaptation strategy can help the model better deal with various domain shifts. 

The distributions of feature statistics between the source and target domain are shown in Figure \ref{fig:shift}. As the previous works \cite{nips20calibration,nips19trans} show, the feature statistics extracted from the baseline model show an obvious shift due to different data distributions. It can be seen that the model trained with DSU has less shift of the feature statistic distribution. Our method can help the model gain robustness towards domain shifts, as it properly models the potential feature statistic shifts.


\begin{figure*}[h]
\setlength{\abovecaptionskip}{0.cm}
\setlength{\belowcaptionskip}{-0.cm}
\begin{center}
\includegraphics[width=1.0\linewidth]{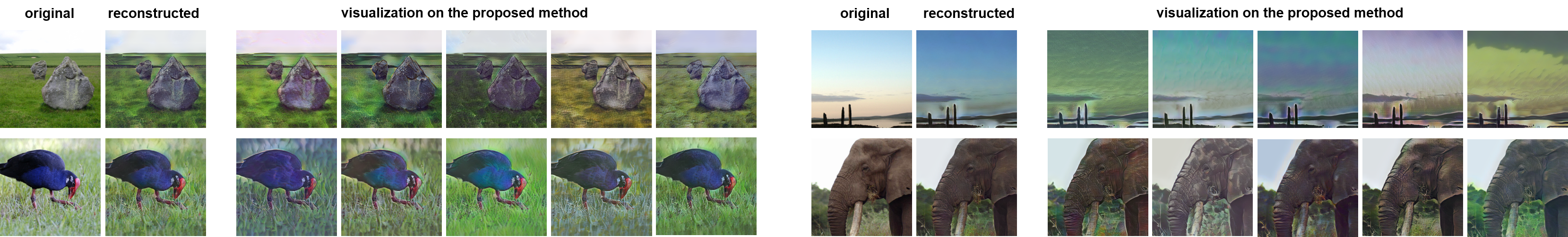}
\caption{The visualization on diverse synthetic changes obtained from our method.
}
\label{fig:visual}
\end{center}
\end{figure*}
\subsection{Domain and Feature Distance}

\begin{table}[h]
\centering
\caption{Quantitative analysis on the intra-domain and inter-domain distance, where wasserstein distance is adopted as the measurement.}
\label{table:was}
\begin{tabular}{l|c|ccc}
\toprule[1pt]
\multicolumn{1}{c|}{\multirow{2}{*}{\begin{tabular}[c]{@{}l@{}}Wasserstein \\ distance\end{tabular}}} & \multirow{2}{*}{\begin{tabular}[c]{@{}l@{}}Between source and \\ target domain\end{tabular}} & \multicolumn{3}{c}{Inner source domains}                             \\ \cline{3-5} 
\multicolumn{1}{c|}{}                                                                       &                                                                                              & \multicolumn{1}{c}{Cartoon} & \multicolumn{1}{c}{Photo} & Sketch \\ \hline
Baseline                                                                                     & 1.22                                                                                         & \multicolumn{1}{c}{0.72}    & \multicolumn{1}{c}{1.04}  & 1.32   \\ 
DSU                                                                                          & 0.77                                                                                         & \multicolumn{1}{c}{0.47}    & \multicolumn{1}{c}{0.63}  & 0.68   \\
\bottomrule[1pt]
\end{tabular}
\end{table}

Besides the effects on feature statistics, we further analyze how the proposed method influences the domain distance in Table \ref{table:was}, \textit{i.e.,} the inner source training domains and between source training and target testing domain. The experiment is conducted on PACS and adopts the wasserstein distance of features as the measurement. It can be seen that the distances between individual training domains and the entire training domain are narrowed, verifying the conclusion that DSU can help the model encode better domain-invariant features. At the same time, the distance between the training and test domain is largely narrowed from 1.22 to 0.77, it verifies our theoretical analysis that modeling the statistic uncertainty can improve the robustness of domain shifts and implicitly decrease the domain distance under domain shifts.

\begin{figure}[ht]
\setlength{\abovecaptionskip}{0.cm}
\setlength{\belowcaptionskip}{-0.cm}
\begin{center}
\includegraphics[width=0.4\textwidth]{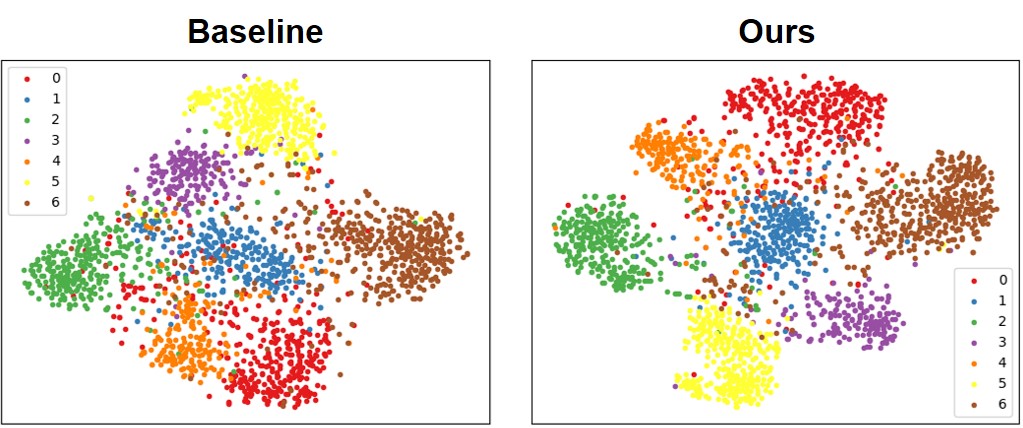}
\caption{The t-SNE visualization of different classes on unseen PACS domain.}
\label{fig:tsne}
\end{center}
\vspace{-2pt}
\end{figure}

To analyze the effects on feature representations, we visualize the feature representation vectors of different categories in the unseen domain with t-SNE \cite{tsne} in Figure \ref{fig:tsne}. The features of the same category become more compact and the classification boundary becomes more distinct, benefiting from the proposed method. Because our method can alleviate the domain perturbations during training and make the model focus on content information, obtaining more invariant feature representations.


\subsection{Visualization on the Synthetic Changes}

Besides the quantitative experiment results, we also obtain a more intuitional view of the diverse changes provided by our method, through visualizing the reconstruction results using a predefined autoencoder \cite{adain}, where the proposed module is inserted into the encoder, and inverse the feature representations into synthetic images after the decoder. As the results shown in Figure \ref{fig:visual}, the reconstructed images obtained from our probabilistic approach show diverse synthetic changes, such as the environment, object texture, and contrast, etc. It shows that modeling the feature statistics as uncertain distributions can provide diverse possibilities of domain perturbations.

\section{Discussion}

\subsection{Within-dataset Performance}

In Table \ref{table:within}, we provide the within-dataset performance on PACS. According to its multi-source training protocol, the within-domain performance is averaged over multiple training-domain datasets (P,A,C,S denotes Art, Cartoon, Photo, and Sketch respectively). As shown in Table \ref{table:within}, it can be observed that the within-dataset performance of our DSU still outperforms the baseline, indicating that DSU does not sacrifice the in-domain performance to gain benefits on out-of-distribution domains. The reason might be that instances in the testing set may not always fall into exactly the same distribution of the training set, and they still have slight statistic shifts \cite{rbn}. The proposed DSU can help the trained model improve the robustness to statistics shifts and thus gain better performance in within-dataset. Besides, the experiment results on the intra-family pose estimation task also verify the conclusion.

\begin{table}[h]

\small
\center{
\caption{Within-dataset performance on PACS. P,A,C,S denote Photo, Art Painting, Cartoon, and Sketch respectively.}
\label{table:within}
\resizebox{\linewidth}{!}{
\begin{tabular}{l|c|cccc|c}
\toprule[1pt]
\multicolumn{1}{l|}{Method} & Reference & P,C,S & P,A,S & A,C,S & P,A,C & Average (\%)  \\ \hline
Baseline&- &95.70 &95.28  &94.22  &96.58  & 95.44\\

DSU& Ours & 96.20 & 96.32  & 95.17 & 97.20 &96.21 \\

\bottomrule[1pt]
\end{tabular}
}
}
\end{table}


\subsection{Choice of Uncertainty Distribution}

In our method, the Gaussian distribution with uncertainty estimation is adopted as the default setting, we also conduct other distributions for comparisons in Table \ref{table:fixed}. Specifically, Random denotes directly adding random shifts drawn from a fixed Gaussian $\mathcal{N}(0,1)$, and Uniform denotes that the shifts are drawn from $\text{U}(-\Sigma,\Sigma)$, where $\Sigma$ is the scope obtained from our uncertainty estimation. As we can see, directly using Gaussian distribution with the improper variant scope will harm the model performances, indicating the variant range of feature statistics should have some instructions. Further analysis about different vanilla Gaussian distributions with pre-defined standard deviation is conducted in the Appendix. Meanwhile, the result of Uniform shows some improvement but is still lower than DSU, which indicates the boundless Gaussian distribution is more helpful to model more diverse variants.

\begin{table}[h]
\centering
\caption{Intensive study about different vanilla Gaussian distributions with pre-defined standard deviation.}
\resizebox{\linewidth}{!}{
\begin{tabular}{c|c|cccccc}
\toprule[1pt]
Choice  & Baseline & Rand(${10^{0}}$) & Rand(${10^{\text{-1}}}$) &  Rand($10^{\text{-2}}$) & Rand($10^{\text{-3}}$) & DSU \\
\hline
PACS &  79.0& 76.9 & 81.2 & 79.3 & 79.1  &84.1 \\
GTA5 to Cityscapes & 37.0 & 38.2 & 39.8 & 40.1 & 38.9 & 43.1\\  
\toprule[1pt]
\end{tabular}
}
\label{table:fixed}
\end{table}

Besides the analysis of the uncertainty estimation in the ablation study, we also conduct a more intensive study about the effects of pre-defined uncertainty estimations. Specifically, Rand$(s)$ denotes directly imposing random shifts drawn from a fixed Gaussian $\mathcal{N}(0,s^2)$. The intensive study is shown in Table \ref{table:fixed}. It can be observed that the results of different fixed distributions are all much lower than the proposed method. Some conclusions could be obtained from the results. (a): Imposing excessive uncertainty might harm the model training and degrade the performance. (b): The best fixed value of uncertainty estimation might vary from different tasks. By contrast, the proposed method can be adaptive to different tasks without any manual adjustment. 

\begin{table}[h]
\centering

\caption{Study about the effects of sharing the same uncertain distribution among different channels.}
\begin{tabular}{c|c|cc}
\toprule[1pt]
Choice  & Baseline & Channel-share & DSU \\
\hline
PACS &  79.0& 80.2 &84.1 \\
GTA5 to Cityscapes & 37.0 & 39.3 & 43.1\\  
\toprule[1pt]
\end{tabular}
\label{table:same}
\end{table}

We also conduct the experiment to test the effectiveness of treating different channels with different potentials. Channel-share denotes all channels of the sample share the same uncertainty distribution, \textit{i.e.,} using the average uncertainty estimation among channels. As shown in Table \ref{table:same}, the results indicate that sharing the same uncertain distribution among different channels is less effective, which ignores the different potentials of channels and will limit their performances. Meanwhile, the proposed method explicitly considers the different potentials of different channels and brings better performances.

\subsection{Efficiency of Our Method}
DSU++ has manifested obvious performance gains in various vision tasks compared to the baseline method, without bells and whistles. To show its efficiency, we provide the comparisons of running time and additional parameters to baseline in Table \ref{table:efficiency}. DSU is a non-parametric method and does not provide trainable parameters. The training domain channel-wise shift regions are required during inference-time, which slightly increases $\sim 0.02\%$ additional parameter compared to baseline. From the table, we can see that the training and inference time increase by $\sim 6\%$ and $\sim 1\%$ to the baseline method, which is a relatively small afford to runtime and thus is very efficient.

\begin{table}[h]

\small
\center{
\caption{The comparisons of model parameters and running time.}
\label{table:efficiency}
\resizebox{\linewidth}{!}{
\begin{tabular}{l|c|ccc}
\toprule[1pt]
Method & \multicolumn{1}{c|}{Reference} & \multicolumn{1}{c}{\begin{tabular}[c]{@{}c@{}}Model parameter\\ (number)\end{tabular}} & \multicolumn{1}{c}{\begin{tabular}[c]{@{}c@{}}Training time\\ (millisecond)\end{tabular}} & \multicolumn{1}{c}{\begin{tabular}[c]{@{}c@{}}Inference time\\ (millisecond)\end{tabular}} \\
\hline
Baseline&- &111,775,12 &8.03 ms  & 2.81 ms\\
DSU++& Ours & 111,798,16 & 8.41 ms  & 2.84 ms  \\


\bottomrule[1pt]
\end{tabular}
}
}
\end{table}

\section{Conclusions}
\label{others}
In this paper, we propose a probabilistic approach to improve the network generalization ability by modeling the uncertainty of domain shifts with synthesized feature statistics during training. Each feature statistic is hypothesized to follow a multi-variate Gaussian distribution for modeling the diverse potential shifts. Due to the generated feature statistics with diverse distribution possibilities, the models can gain better robustness towards diverse domain shifts. During inference, we propose to adaptively calibrate feature statics to enhance the model generalization ability to deal with various domain shifts. Comprehensive experiment results and theoretical analysis demonstrate the effectiveness of our method in improving the network generalization ability.
\ifCLASSOPTIONcompsoc
  \section*{Acknowledgments}
  \label{acknowlegement}
This work was supported by the National Natural Science Foundation of China under Grant 62088102, and in part by the PKU-NTU Joint Research Institute (JRI) sponsored by a donation from the Ng Teng Fong Charitable Foundation. We sincerely appreciate Dr. Ying Shan for providing valuable GPU computing resource and insightful comments to improve the presentation of this work.
\fi

\ifCLASSOPTIONcaptionsoff
  \newpage
\fi

\bibliographystyle{IEEEtran}
\bibliography{IEEEabrv,pami}

\clearpage
\onecolumn
\appendices
\section{Theory Proof}
\label{sec: appendix}
We recall the notation that we use in the main paper as well as this appendix.

\textbf{Supplementary Notation} 
$\circ$ denotes composition. $\mathbbm{1}_{a\times b}$ denotes the vector of size $a\times b$ with all components equal one. $\mathbf{I}$ denotes the identity matrix. The subscript $_{||f||_L\leq 1}$ denotes the set of all Lipschitz continuous functions with a Lipschitz constant less than 1. $\langle x, y\rangle$ denotes the cosine metric. $(\cdot)_i$ denotes the i-th channel. The expected value of a random variable $X$ is denoted by $\mathbb{E}[X]$. $\odot$ denotes Hadamard product. $\cdot$ denotes both dot product and scalar multiplication. We give some examples of these operations as follows.

If $k$ is a constant, $x=(x_{ij})_{a\times b},y=(y_{ij})_{a\times b}$ are two tensors of size $a\times b$ and $z=(z_{ij})_{b \times c}$ is a tensor of size $b\times c$, then
\begin{equation}
\begin{split}
     x\odot y=(x_{ij}y_{ij}&)_{a\times  b}, \ \frac{x}{y}=(\frac{x_{ij}}{y_{ij}})_{a\times b}, \\
     k\cdot x=(k\,x_{ij})_{a\times b}&, \ x\cdot y=\sum_{i=1}^{a}\sum_{j=1}^{b}x_{ij}y_{ij}, \\
     \langle x,y \rangle = \frac{x\cdot y}{||x||_2||y||_2}, \ (x)_i=&(x_{ij})_{1\times b}, \ xz=(\sum_{t=1}^{b}x_{it}z_{tj})_{a\times c}.
\end{split}
\end{equation}
\subsection{Proof of Theorem 4.1}
\label{Appendix: generalization error bound}
In Theorem \ref{Generalization bound}, we bound the generalization gap by employing Rademacher Complexity, which is a measure of the richness of function classes. And we give the definition as follows:
\begin{definition}
\label{rademacher}
\textbf{(Rademacher Complexity\cite{bartlett2002rademacher})} Given a space $\mathcal{X}$ and a fixed distribution $\mathcal{D}$ over $\mathcal{X}$. $X = \{x_1$, . . . , $x_n\}$ is a set of samples drawn from $\mathcal{D}$-i.i.d. Let $\mathcal{F}$ be a class of real-valued function $f:\mathcal{X}\to R$. The Empirical Rademacher Complexity of the function class $F$ is defined to be 
\begin{equation}
\hat{\mathcal{R}}_X(\mathcal{F})=\mathbbm{E}_{\sigma}[\mathop{sup}\limits_{f\in\mathcal{F}}(\frac{1}{n}\sum\limits_{i=1}^{n}\sigma_i f(x_i)) ].
\end{equation}
where $\sigma_1$, . . . , $\sigma_n$ are independent random variables uniformly chosen from $\{ -1, 1 \}$. \\
And the Rademacher Complexity of the function class $F$ is defined to be
\begin{equation}
\mathcal{R}_n(\mathcal{F})=\mathbbm{E}_{X\sim\mathcal{D}^n}[\hat{\mathcal{R}}_X(\mathcal{F})].
\end{equation}
\end{definition}
We need to use an inequality about Rademacher Complexity in our proof and we take it as a lemma.
\begin{lemma}
\label{rademacher inequality}
\textbf{(Rademacher Complexity bound \cite{bousquet2003introduction})} 
Given a space $\mathcal{X}$ and a fixed distribution $\mathcal{D}$ over $\mathcal{X}$. $X=\{x_1$, . . . , $x_n\}$ is a set of samples drawn from $\mathcal{D}$-i.i.d. Let $\mathcal{F}$ be a class of real-valued function $f:\mathcal{X}\to [0,1]$. The following inequality holds for every function $f\in \mathcal{F}$ with probability at least $1-\delta$$:$
\begin{equation}
    \mathbbm{E}_{\mathcal{D}}[f(x)]\leq \frac{1}{n}\sum\limits_{i=1}^{n}f(x_i)+2\mathcal{R}_n(\mathcal{F})+\sqrt{\frac{log(\frac{1}{\delta})}{2n}}.
\end{equation}
In addition, the following inequality holds for every function $f\in \mathcal{F}$ with probability at least $1-\delta$$:$
\begin{equation}
    \mathbbm{E}_{\mathcal{D}}[f(x)]\leq \frac{1}{n}\sum\limits_{i=1}^{n}f(x_i)+2\hat{\mathcal{R}}_X(\mathcal{F})+3\sqrt{\frac{log(\frac{2}{\delta})}{2n}}.
\end{equation}
\end{lemma}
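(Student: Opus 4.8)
The plan is to establish both inequalities through the standard two-step machinery of the bounded-differences (McDiarmid) inequality together with a symmetrization argument. Define the one-sided uniform deviation
\begin{equation}
\Phi(X) = \sup_{f\in\mathcal{F}}\left(\mathbb{E}_{\mathcal{D}}[f(x)] - \frac{1}{n}\sum_{i=1}^{n}f(x_i)\right).
\end{equation}
First I would verify that $\Phi$ has bounded differences: since every $f\in\mathcal{F}$ takes values in $[0,1]$, replacing a single sample $x_i$ by an arbitrary $x_i'$ changes the empirical average $\frac{1}{n}\sum_i f(x_i)$ by at most $1/n$ for each fixed $f$, and because this bound is uniform in $f$ it passes through the supremum, giving $|\Phi(X)-\Phi(X')|\le 1/n$. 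McDiarmid's inequality then yields, with probability at least $1-\delta$,
\begin{equation}
\Phi(X)\le \mathbb{E}[\Phi(X)]+\sqrt{\frac{\log(1/\delta)}{2n}}.
\end{equation}

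Next I would bound $\mathbb{E}[\Phi(X)]$ by symmetrization. Introducing an independent ghost sample $X'=\{x_1',\dots,x_n'\}$ drawn i.i.d.\ from $\mathcal{D}$, I rewrite $\mathbb{E}_{\mathcal{D}}[f(x)]=\mathbb{E}_{X'}[\frac{1}{n}\sum_i f(x_i')]$, move the expectation outside the supremum by Jensen's inequality, and obtain
\begin{equation}
\mathbb{E}[\Phi(X)]\le \mathbb{E}_{X,X'}\left[\sup_{f\in\mathcal{F}}\frac{1}{n}\sum_{i=1}^{n}\big(f(x_i')-f(x_i)\big)\right].
\end{equation}
Because $x_i$ and $x_i'$ are i.i.d., the joint law is invariant under swapping the pair $(x_i,x_i')$, so each difference $f(x_i')-f(x_i)$ may be multiplied by an independent Rademacher sign $\sigma_i\in\{-1,1\}$ without changing the expectation; splitting the supremum of the symmetrized sum into two suprema and using that $\sigma_i$ and $-\sigma_i$ have the same law then produces the factor of two, giving $\mathbb{E}[\Phi(X)]\le 2\mathcal{R}_n(\mathcal{F})$. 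Substituting into the McDiarmid bound establishes the first inequality.

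For the second inequality I would empiricalize the Rademacher complexity. Viewing $\hat{\mathcal{R}}_X(\mathcal{F})$ as a function of the sample, the same $[0,1]$-boundedness argument shows it also has bounded differences $1/n$, so a second application of McDiarmid gives $\mathcal{R}_n(\mathcal{F})=\mathbb{E}[\hat{\mathcal{R}}_X(\mathcal{F})]\le \hat{\mathcal{R}}_X(\mathcal{F})+\sqrt{\log(2/\delta)/(2n)}$ with probability at least $1-\delta/2$. Applying the first inequality with confidence parameter $\delta/2$ and combining the two high-probability events by a union bound then yields, with probability at least $1-\delta$,
\begin{equation}
\mathbb{E}_{\mathcal{D}}[f(x)]\le \frac{1}{n}\sum_{i=1}^{n}f(x_i)+2\hat{\mathcal{R}}_X(\mathcal{F})+3\sqrt{\frac{\log(2/\delta)}{2n}},
\end{equation}
where the three identical tail terms (one from the first inequality, two from the factor $2$ multiplying the empiricalization gap) accumulate into the constant $3$. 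The main obstacle I expect is the symmetrization step: one must argue carefully that inserting the Rademacher variables leaves the expectation unchanged (a distributional-invariance argument, not a pointwise identity) and that the single supremum over the symmetrized difference decouples into two copies to extract the factor of two. The McDiarmid applications and the union-bound bookkeeping of constants are routine by comparison.
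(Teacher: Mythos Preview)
Your proposal is correct and follows the standard textbook argument (McDiarmid for concentration of the uniform deviation, symmetrization via a ghost sample to obtain the factor $2\mathcal{R}_n(\mathcal{F})$, and a second McDiarmid plus union bound to empiricalize). The paper itself does not prove this lemma at all: it is quoted as a known result from \cite{bousquet2003introduction} and used as a black box in the proof of Theorem~\ref{Generalization bound}, so there is no paper proof to compare against beyond noting that your derivation reproduces exactly the cited bound.
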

We require to prove another important lemma before we prove the main theorems of our work.
\begin{lemma}
\label{Appendix: lemma}
Consider a $L$-Lipschitz function class $\mathcal{H}$  and two probability distribution $\mathcal{D}_1,\mathcal{D}_2$
over space $\mathcal{X}$.  The following inequality holds for the Wasserstein distance between $\mathcal{D}_1,\mathcal{D}_2$:
\begin{equation}
    \mathcal{W}(\mathcal{D}_1,\mathcal{D}_2)\geq \frac{1}{2L} |\epsilon_{\mathcal{D}_1}(h_1, h_2)- \epsilon_{\mathcal{D}_2}(h_1, h_2)|,
\end{equation}
for any $h_1,h_2\in \mathcal{H}$.
\end{lemma}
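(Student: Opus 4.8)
\textbf{Proof proposal for Lemma \ref{Appendix: lemma}.}

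The plan is to exploit the Kantorovich--Rubinstein dual characterization of the Wasserstein-1 distance, which states that for probability measures $\mathcal{D}_1,\mathcal{D}_2$ on $\mathcal{X}$,
\begin{equation}
    \mathcal{W}(\mathcal{D}_1,\mathcal{D}_2)=\sup_{\|g\|_L\leq 1}\left(\mathbb{E}_{x\sim\mathcal{D}_1}[g(x)]-\mathbb{E}_{x\sim\mathcal{D}_2}[g(x)]\right).
\end{equation}
So the first step is to produce, from a pair $h_1,h_2\in\mathcal{H}$, a concrete $1$-Lipschitz test function $g$ whose expectation gap under $\mathcal{D}_1$ versus $\mathcal{D}_2$ matches the quantity $\epsilon_{\mathcal{D}_1}(h_1,h_2)-\epsilon_{\mathcal{D}_2}(h_1,h_2)$ we want to control. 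The natural candidate is $g(x)=\tfrac{1}{2L}\,|h_1(x)-h_2(x)|$, since by definition $\epsilon_{\mathcal{D}_i}(h_1,h_2)=\mathbb{E}_{x\sim\mathcal{D}_i}|h_1(x)-h_2(x)|$, and therefore
\begin{equation}
    \mathbb{E}_{x\sim\mathcal{D}_1}[g(x)]-\mathbb{E}_{x\sim\mathcal{D}_2}[g(x)]=\frac{1}{2L}\left(\epsilon_{\mathcal{D}_1}(h_1,h_2)-\epsilon_{\mathcal{D}_2}(h_1,h_2)\right).
\end{equation}

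The second step is the Lipschitz bookkeeping: I must verify that $g$ has Lipschitz constant at most $1$ so that it is admissible in the dual supremum. Since each $h\in\mathcal{H}$ is $L$-Lipschitz, the difference $h_1-h_2$ is $2L$-Lipschitz, the absolute value is $1$-Lipschitz as an outer map, so $|h_1-h_2|$ is $2L$-Lipschitz, and dividing by $2L$ gives a $1$-Lipschitz function. Plugging $g$ into the dual formula yields $\mathcal{W}(\mathcal{D}_1,\mathcal{D}_2)\geq \tfrac{1}{2L}(\epsilon_{\mathcal{D}_1}(h_1,h_2)-\epsilon_{\mathcal{D}_2}(h_1,h_2))$. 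To obtain the absolute value on the right-hand side, I run the same argument with the roles of $\mathcal{D}_1$ and $\mathcal{D}_2$ swapped (equivalently, use $-g$, which is also $1$-Lipschitz), giving the reverse inequality; combining the two yields the claimed bound with $|\cdot|$.

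I do not expect a serious obstacle here; the only point requiring mild care is ensuring the chain of Lipschitz-constant manipulations (triangle inequality for Lipschitz seminorms, composition with the $1$-Lipschitz map $t\mapsto|t|$, scalar rescaling) is stated cleanly, and that the dual formula is invoked in the correct ``$\sup$ over $1$-Lipschitz functions'' normalization matching the definition $\mathcal{W}(P_1,P_2)=\inf_{\gamma\sim\Pi(P_1,P_2)}\mathbb{E}_{(x,y)\sim\gamma}\|x-y\|_2$ used in the paper. If one prefers to avoid citing Kantorovich--Rubinstein, an alternative is to argue directly from the coupling definition: for any coupling $\gamma$ of $(\mathcal{D}_1,\mathcal{D}_2)$, $|\epsilon_{\mathcal{D}_1}(h_1,h_2)-\epsilon_{\mathcal{D}_2}(h_1,h_2)|=|\mathbb{E}_{(x,y)\sim\gamma}(|h_1(x)-h_2(x)|-|h_1(y)-h_2(y)|)|\leq \mathbb{E}_{(x,y)\sim\gamma}\,2L\|x-y\|_2$ by the $2L$-Lipschitz bound on $x\mapsto|h_1(x)-h_2(x)|$, and then take the infimum over $\gamma$; this is arguably the cleanest route and sidesteps the duality theorem entirely.
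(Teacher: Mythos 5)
Your proposal is correct and follows essentially the same route as the paper: both invoke the Kantorovich--Rubinstein dual form of $\mathcal{W}$ with the $2L$-Lipschitz test function $|h_1-h_2|$ (rescaled by $\tfrac{1}{2L}$), obtained from the triangle inequality on the $L$-Lipschitz pair $h_1,h_2$. Your write-up is in fact slightly more careful than the paper's, spelling out the composition with the $1$-Lipschitz absolute value and the symmetric swap that yields the $|\cdot|$ on the right-hand side, and the coupling-based alternative you sketch is also valid.
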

\begin{proof}
As any function in $\mathcal{H}$ is $L$-Lipschitz, we have
\begin{equation}
    \begin{split}
    |h_1(x)-h_1(y)|\leq L\cdot ||x-y||, \forall x,y\in\mathcal{X}. \\
    |h_2(x)-h_2(y)|\leq L\cdot ||x-y||, \forall x,y\in\mathcal{X}.
    \end{split}
\end{equation}
From the triangle inequality, we have
\begin{equation}
\label{Appendix: triangle}
    |(h_1(x) - h_2(x)) - (h_1(y) - h_2(y))|\leq|h_1(x)-h_1(y)|+|h_2(x)-h_2(y)| \leq 2L\cdot ||x-y||,\forall x,y\in \mathcal{X}.
\end{equation}
The inequality \ref{Appendix: triangle} shows that the function $h_1- h_2$ is $2L$-Lipschitz for any $h_1,h_2\in\mathcal{H}$. From the Kantorovich-Rubinstein Duality, we have 
\begin{equation}
\label{Appendix: Duality}
    \mathcal{W}(\mathcal{D}_1,\mathcal{D}_2)=sup_{||f||_L\leq 1}\mathbb{E}_{x\sim \mathcal{D}_1}[f(x)]-\mathbb{E}_{x\sim \mathcal{D}_2}[f(x)].
\end{equation}
Combining the Equation \ref{Appendix: triangle} and \ref{Appendix: Duality}, we have
\begin{equation}
\begin{split}
 \mathcal{W}(\mathcal{D}_1,\mathcal{D}_2)&=sup_{||f||_L\leq 1}\mathbb{E}_{x\sim \mathcal{D}_1}[f(x)]-\mathbb{E}_{x\sim \mathcal{D}_2}[f(x)] \\
 &\geq \frac{1}{2L} |\mathbb{E}_{z\sim \mathcal{D}_1}[|h_1(z)-h_2(z)|]-\mathbb{E}_{z\sim \mathcal{D}_2}[|h_1(z)-h_2(z)|]\,| \\
 &= \frac{1}{2L}|\epsilon_{\mathcal{D}_1}(h_1, h_2)- \epsilon_{\mathcal{D}_2}(h_1, h_2)|.
\end{split}
\end{equation}
\end{proof}

\noindent We now prove the Theorem \ref{Generalization bound}.
\begin{theorem}
\textbf{(Generalization gap bound)} Consider the representation function $h_k:\mathcal{X}\rightarrow \mathcal{Z}_k$ and a Lipschitz continuous function class $\mathcal{H}$ with Lipschitz constant $L$. If $X = \{x_1$, . . . , $x_n\}$ is a set of samples drawn from $\mathcal{S}$-i.i.d labeled according to $f$. Then, for any $h \in \mathcal{H}$, the following bound holds with probability at least $1-\delta$$:$  
\begin{equation}
    \begin{split}
        \epsilon_{\mathcal{T}}(h)&\leq \hat{\epsilon}_{\mathcal{S}}(h)+2L\sum\limits_{i=1}^{S}\Pi_{i}\cdot \int_{e} \Pi_e\cdot \mathcal{W}(\widetilde{\mathcal{D}}_{e_i},\widetilde{\mathcal{D}}_e)   \,\mathsf{d}e +\lambda+2\hat{\mathcal{R}}_X(\mathcal{H'})+3\sqrt{\frac{log(\frac{2}{\delta})}{2n}}.
    \end{split}
\end{equation}
where $\hat{\epsilon}_{\mathcal{S}}(h)=\frac{1}{n}\sum\limits_{i=1}^n |h(h_k(x_i))-f(x_i)|$ is the empirical risk and $\mathcal{H'}:=\{|h\circ h_k - f|\ |h\in \mathcal{H}\}$.
\end{theorem}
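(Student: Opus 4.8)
The idea is to run the classical Ben-David-style domain-adaptation decomposition through an intermediate hypothesis, but adapted to our continuum-of-environments mixture model and phrased entirely in the feature space induced by $h_k$. First I would invoke the $\lambda$-closeness assumption (Definition~\ref{close}) to fix an $h^{*}\in\mathcal{H}$ with $\epsilon_{\mathcal{S}}(h^{*},f_k)+\epsilon_{\mathcal{T}}(h^{*},f_k)\le\lambda$. Since each $\epsilon_{\mathcal{D}}(\cdot,\cdot)$ is an $L^{1}$-type distance between functions, the triangle inequality gives $\epsilon_{\mathcal{T}}(h)\le\epsilon_{\mathcal{T}}(h,h^{*})+\epsilon_{\mathcal{T}}(h^{*},f_k)$ and $\epsilon_{\mathcal{S}}(h,h^{*})\le\epsilon_{\mathcal{S}}(h,f_k)+\epsilon_{\mathcal{S}}(h^{*},f_k)$, so the whole problem reduces to (a) bounding the cross-domain term $\epsilon_{\mathcal{T}}(h,h^{*})$ by $\epsilon_{\mathcal{S}}(h,h^{*})$ plus a Wasserstein penalty, and (b) replacing the population source error $\epsilon_{\mathcal{S}}(h,f_k)$ by the empirical risk $\hat{\epsilon}_{\mathcal{S}}(h)$.

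For step (a), I would use that the error functional is \emph{affine} in the underlying distribution together with the Lipschitz hypothesis. In feature space $\widetilde{\mathcal{S}}=\sum_{i=1}^{S}\Pi_i\widetilde{\mathcal{D}}_{e_i}$ and $\widetilde{\mathcal{T}}=\int_{e}\Pi_e\widetilde{\mathcal{D}}_e\,\mathsf{d}e$, hence $\epsilon_{\mathcal{S}}(h,h^{*})=\sum_i\Pi_i\,\epsilon_{\widetilde{\mathcal{D}}_{e_i}}(h,h^{*})$ and $\epsilon_{\mathcal{T}}(h,h^{*})=\int_e\Pi_e\,\epsilon_{\widetilde{\mathcal{D}}_e}(h,h^{*})\,\mathsf{d}e$. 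Because $h,h^{*}\in\mathcal{H}$ are $L$-Lipschitz, Lemma~\ref{Appendix: lemma} gives, for every source index $i$ and every environment $e$, $\epsilon_{\widetilde{\mathcal{D}}_e}(h,h^{*})\le\epsilon_{\widetilde{\mathcal{D}}_{e_i}}(h,h^{*})+2L\,\mathcal{W}(\widetilde{\mathcal{D}}_{e_i},\widetilde{\mathcal{D}}_e)$. Averaging the right-hand side against the source weights $\Pi_i$ (which sum to $1$) and then integrating against the target weights $\Pi_e$ (which integrate to $1$) produces exactly $\epsilon_{\mathcal{T}}(h,h^{*})\le\epsilon_{\mathcal{S}}(h,h^{*})+2L\sum_{i=1}^{S}\Pi_i\int_e\Pi_e\,\mathcal{W}(\widetilde{\mathcal{D}}_{e_i},\widetilde{\mathcal{D}}_e)\,\mathsf{d}e$. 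Chaining this with the two triangle inequalities and the $\lambda$-closeness bound yields $\epsilon_{\mathcal{T}}(h)\le\epsilon_{\mathcal{S}}(h,f_k)+2L\sum_{i=1}^{S}\Pi_i\int_e\Pi_e\,\mathcal{W}(\widetilde{\mathcal{D}}_{e_i},\widetilde{\mathcal{D}}_e)\,\mathsf{d}e+\lambda$.

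For step (b), I would first reconcile the feature-space error with the image-space empirical risk: using $f_k(z)=\mathbb{E}_{\mathcal{S}}[f(x)\mid h_k(x)=z]$ from Definition~\ref{feature target funtion} and Jensen's inequality applied to $|\cdot|$, one has $\epsilon_{\mathcal{S}}(h,f_k)=\mathbb{E}_{z\sim\widetilde{\mathcal{S}}}|h(z)-f_k(z)|\le\mathbb{E}_{x\sim\mathcal{S}}|h(h_k(x))-f(x)|$, i.e. the population version of $\hat{\epsilon}_{\mathcal{S}}(h)$. Then I would apply the empirical Rademacher bound of Lemma~\ref{rademacher inequality} to the $[0,1]$-valued class $\mathcal{H'}=\{\,|h\circ h_k-f|:h\in\mathcal{H}\,\}$, obtaining with probability at least $1-\delta$ and uniformly over $h\in\mathcal{H}$ that $\mathbb{E}_{x\sim\mathcal{S}}|h(h_k(x))-f(x)|\le\hat{\epsilon}_{\mathcal{S}}(h)+2\hat{\mathcal{R}}_X(\mathcal{H'})+3\sqrt{\log(2/\delta)/(2n)}$. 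Substituting this into the inequality from step (a) gives the claimed bound.

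The step I expect to be the main obstacle is (a): one must verify that the error functional really is affine in the mixing distribution (which relies on it being an expectation rather than, say, a supremum), that the pointwise Lipschitz–Wasserstein inequality of Lemma~\ref{Appendix: lemma} can be integrated over a \emph{continuum} of target environments without measurability obstructions, and that the resulting double averaging yields precisely the coefficient $\sum_i\Pi_i\int_e\Pi_e(\cdot)\,\mathsf{d}e$ rather than a coarser single Wasserstein term between the mixtures. The technicalities in step (b) — confirming $\mathcal{H'}$ maps into $[0,1]$ so Lemma~\ref{rademacher inequality} applies verbatim, and justifying the Jensen reduction from feature space to image space — are routine but should be stated explicitly.
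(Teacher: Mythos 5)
Your proposal is correct and follows essentially the same route as the paper's proof: the intermediate hypothesis $h^{*}$ from the $\lambda$-closeness definition, triangle-inequality decomposition, the Lipschitz--Wasserstein lemma applied pairwise over the source/target mixture components to produce the $\sum_i\Pi_i\int_e\Pi_e\,\mathcal{W}(\widetilde{\mathcal{D}}_{e_i},\widetilde{\mathcal{D}}_e)\,\mathsf{d}e$ term, and the empirical Rademacher lemma on the $[0,1]$-valued class $\mathcal{H}'$ for the source risk. Your explicit Jensen step reconciling $\epsilon_{\mathcal{S}}(h,f_k)$ with the image-level empirical risk is a point the paper leaves implicit, but otherwise the two arguments coincide.
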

\begin{proof}
Let $h^*=\text{arg}\mathop\text{min}_{h\in \mathcal{H}}(\epsilon_{\mathcal{S}}(h, f_k)+\epsilon_{\mathcal{T}}(h,f_k))$. According to the Definition \ref{close}, we have $\epsilon_{\mathcal{S}}(h^*, f_k)+\epsilon_{\mathcal{T}}(h^*, f_k)\leq \lambda$.
\begin{equation}
\label{Appendix: proof2}
    \begin{split}
        \epsilon_{\mathcal{T}}(h)&\leq \epsilon_{\mathcal{T}}(h^*,f_k)+\epsilon_{\mathcal{T}}(h^*,h)\\
        &\leq \epsilon_{\mathcal{T}}(h^*,f_k)+\epsilon_{\mathcal{S}}(h^*,h)+|\epsilon_{\mathcal{T}}(h^*,h)-\epsilon_{\mathcal{S}}(h^*,h)|\\
        &\leq \epsilon_{\mathcal{T}}(h^*,f_k)+\epsilon_{\mathcal{S}}(h^*,h)+ \sum_{i=1}^{S}\Pi_i\cdot\int_{e}\Pi_e\cdot |\epsilon_{\mathcal{D}_{e}}(h^*,h)-\epsilon_{\mathcal{D}_{e_i}}(h^*,h)|\,\mathsf{d}e \\
        &\leq \epsilon_{\mathcal{T}}(h^*,f_k)+\epsilon_{\mathcal{S}}(h^*,h)+      2L\sum_{i=1}^{S}\Pi_i\cdot\int_{e}\Pi_e\cdot \mathcal{W}(\widetilde{\mathcal{D}}_{e_i},\widetilde{\mathcal{D}}_e)   \,\mathsf{d}e \\
        &\leq \epsilon_{\mathcal{T}}(h^*,f_k)+\epsilon_{\mathcal{S}}(h^*,f_k)+\epsilon_{\mathcal{S}}(h,f_k)+2L\sum_{i=1}^{S}\Pi_i\cdot\int_{e}\Pi_e\cdot \mathcal{W}(\widetilde{\mathcal{D}}_{e_i},\widetilde{\mathcal{D}}_e)   \,\mathsf{d}e \\
        &\leq \lambda+\epsilon_{\mathcal{S}}(h,f_k)+2L\sum_{i=1}^{S}\Pi_i\cdot\int_{e}\Pi_e\cdot \mathcal{W}(\widetilde{\mathcal{D}}_{e_i},\widetilde{\mathcal{D}}_e)   \,\mathsf{d}e.
    \end{split}
\end{equation}
Note that, functions in $\mathcal{H'}$ are mappings from $\mathcal{X}$ to $[0,1]$. Through the lemma \ref{rademacher inequality}, we can bound the real risk $\epsilon_{\mathcal{S}}(h,f_k)$ by its empirical result $\hat{\epsilon}_{\mathcal{S}}(h,f_k)$. 
\begin{equation}
\label{Appendix: proof3}
    \epsilon_{\mathcal{S}}(h,f_k)\leq \hat{\epsilon}_{\mathcal{S}}(h,f_k)+2\hat{\mathcal{R}}_X(\mathcal{H'})+3\sqrt{\frac{log(\frac{2}{\delta})}{2n}}.
\end{equation}
Combining  Equation \ref{Appendix: proof2} and \ref{Appendix: proof3}, we see that the final result is proved.
\end{proof}
\subsection{Proof of Theorem 4.2}
In section \ref{theory}, we state a simplified version of the theorem about implicit regularization according to torch's computational rules for convenience. Here we rewrite it into a formal mathematical format and prove it.
\begin{theorem}
\label{Appendix: Implicit regularization}
Consider the case of DSU for binary classification with a family $\mathcal{F}$ of linear functions $f_{w, b}(x) = w\cdot x+b$, where $x\in R^{C\times N}$ is the output of k-th layer in network, $w\in R^{C\times N}$ and $b\in R$ are coefficients, $w\cdot x$ represents the operation of dot product. The original loss function is $R(f) = \frac{1}{n}\sum_{i=1}^{n}(f(x_i)-y_i)^2 $. The expectation of loss function after augmentation in k-th layer is:
\begin{equation}
\begin{split}
\label{Appendix: expectation}
    \mathbb{E}[R(f)] &= \frac{1}{n}\sum_{i=1}^{n}\mathbb{E}_{\epsilon_{\sigma},\epsilon_{\mu}}[(w\cdot ((((\sigma(x_i)+ \epsilon_{\sigma}\odot{\Sigma}_{\sigma}(x)) \mathbbm{1}_{1\times N})\odot\frac{x_i-\mu(x_i)\mathbbm{1}_{1\times N}}{\sigma(x_i)\mathbbm{1}_{1\times N}})+\mu(x_i)\mathbbm{1}_{1\times N} \\
    &+(\epsilon_{\mu}\odot{\Sigma}_{\mu}(x))\mathbbm{1}_{1\times N})+b-y_i)^2]. \\
\end{split}
\end{equation}
And we can prove that it equals to the following equation:
\begin{equation}
\begin{split}
    \mathbb{E}[R(f)] &= \frac{1}{n}\sum_{i=1}^{n}(f(x_i)-y_i)^2+ 
    \sum_{i=1}^{C} (\Sigma_{\mu})_i^2\cdot ||w_i||_2^{2} +
    \frac{1}{n}\sum_{i=1}^{C}(\Sigma_{\sigma})_i^2\cdot\sum_{j=1}^{n}||w_i||_2^{2}\cdot \langle w_i,(\frac{x_j- \mu (x_j)\mathbbm{1}_{1\times N}}{\sigma (x_j)\mathbbm{1}_{1\times N}})_i\rangle^2.
\end{split}
\end{equation}
where $\Sigma_{\sigma}$, $\Sigma_{\mu}$, $\epsilon_{\sigma}$, $\epsilon_{\mu}$ are same as $\Sigma_{\sigma}(x)$, $ \Sigma_{\mu}(x)$, $\epsilon_{\sigma}$, $\epsilon_{\mu}$ defined in Equation (\ref{Sigmamu}, \ref{Sigmasigma}, \ref{dsu_training1}, \ref{dsu_training2}).
\end{theorem}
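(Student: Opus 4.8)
The plan is to expand the square in the expectation \eqref{Appendix: expectation} and evaluate each term using the independence and the moments of the standard Gaussian noise vectors $\epsilon_\mu,\epsilon_\sigma$. First I would introduce the shorthand $\widehat{x}_i := \sigma(x_i)\odot\frac{x_i-\mu(x_i)\mathbbm{1}_{1\times N}}{\sigma(x_i)\mathbbm{1}_{1\times N}} + \mu(x_i)\mathbbm{1}_{1\times N}$ for the (deterministic) reconstructed feature, so that $f(x_i) = w\cdot \widehat{x}_i + b$ up to the identity $\sigma(x_i)\cdot\frac{x_i-\mu}{\sigma} = x_i-\mu$, and write the randomized feature as $\widehat{x}_i + \Delta_i$ where the perturbation splits additively into a mean part $\Delta_i^\mu := (\epsilon_\mu\odot\Sigma_\mu)\mathbbm{1}_{1\times N}$ and a scale part $\Delta_i^\sigma := \big((\epsilon_\sigma\odot\Sigma_\sigma)\mathbbm{1}_{1\times N}\big)\odot\frac{x_i-\mu(x_i)\mathbbm{1}_{1\times N}}{\sigma(x_i)\mathbbm{1}_{1\times N}}$. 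Then $w\cdot(\widehat{x}_i+\Delta_i)+b-y_i = \big(f(x_i)-y_i\big) + w\cdot\Delta_i^\mu + w\cdot\Delta_i^\sigma$.

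Expanding $\mathbb{E}_{\epsilon_\mu,\epsilon_\sigma}[(\,\cdot\,)^2]$ yields six terms. The three cross terms vanish: $\mathbb{E}[w\cdot\Delta_i^\mu]=\mathbb{E}[w\cdot\Delta_i^\sigma]=0$ because $\mathbb{E}[\epsilon_\mu]=\mathbb{E}[\epsilon_\sigma]=0$, and $\mathbb{E}[(w\cdot\Delta_i^\mu)(w\cdot\Delta_i^\sigma)]=0$ because $\epsilon_\mu$ and $\epsilon_\sigma$ are independent and each mean-zero. This leaves $\mathbb{E}[R(f)] = \frac1n\sum_i\big[(f(x_i)-y_i)^2 + \mathbb{E}[(w\cdot\Delta_i^\mu)^2] + \mathbb{E}[(w\cdot\Delta_i^\sigma)^2]\big]$. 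For the mean-part variance, $w\cdot\Delta_i^\mu = \sum_{c=1}^C (\epsilon_\mu)_c (\Sigma_\mu)_c \sum_{j=1}^N w_{cj}$, so using $\mathbb{E}[(\epsilon_\mu)_c(\epsilon_\mu)_{c'}]=\delta_{cc'}$ gives $\mathbb{E}[(w\cdot\Delta_i^\mu)^2] = \sum_c (\Sigma_\mu)_c^2\big(\sum_j w_{cj}\big)^2$. Here I would note that the statement's $\|w_i\|_2^2$ in the mean term should be read as $(\sum_j w_{ij})^2 = \langle w_i,\mathbbm{1}\rangle^2\|w_i\|_2^2\cdot(\text{const})$; more cleanly, since this term does not depend on $i$ (the sample index), it contributes $\sum_c(\Sigma_\mu)_c^2(\sum_j w_{cj})^2$ after averaging, matching the claimed form up to the paper's notational convention of absorbing $\mathbbm{1}_{1\times N}$ into a channel norm.

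For the scale-part variance, write $u_{i,c} := \big(\tfrac{x_i-\mu(x_i)\mathbbm{1}}{\sigma(x_i)\mathbbm{1}}\big)_c \in \mathbb{R}^N$, the $c$-th channel of the normalized feature. Then $w\cdot\Delta_i^\sigma = \sum_c (\epsilon_\sigma)_c (\Sigma_\sigma)_c \langle w_c, u_{i,c}\rangle$ (ordinary inner product on $\mathbb{R}^N$), hence $\mathbb{E}[(w\cdot\Delta_i^\sigma)^2] = \sum_c (\Sigma_\sigma)_c^2 \langle w_c, u_{i,c}\rangle^2$. Finally I would rewrite $\langle w_c,u_{i,c}\rangle^2 = \|w_c\|_2^2\|u_{i,c}\|_2^2 \cos^2(w_c,u_{i,c})$, which with the paper's notation $\langle\cdot,\cdot\rangle$ for cosine similarity and the implicit normalization $\|u_{i,c}\|_2 \approx \sqrt{N}$ (since $u_{i,c}$ has zero empirical mean and unit empirical variance over its $N$ entries) produces the claimed coefficient $\|w_i\|_2^2\langle w_i, u_{j,i}\rangle^2$. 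Averaging over $j=1,\dots,n$ and summing over channels $i=1,\dots,C$ gives the third term of \eqref{regression_exp}. The main obstacle I anticipate is not any deep inequality but the careful bookkeeping of broadcasting: reconciling the paper's mixed use of $w\cdot x$ as a full dot product versus the channelwise inner products $\langle w_i, \cdot\rangle$, and tracking where factors of $N$ (from the $\mathbbm{1}_{1\times N}$ broadcasts) are silently absorbed into the $\|w_i\|_2$ and cosine-similarity notation; I would state the normalization conventions explicitly at the outset so the final identity matches \eqref{regression_exp} verbatim.
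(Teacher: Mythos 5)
Your proposal follows essentially the same route as the paper's own proof: the same additive decomposition of the perturbed feature into the clean term plus a mean perturbation and a scale perturbation, cross terms killed by the zero mean and mutual independence of $\epsilon_{\mu},\epsilon_{\sigma}$, and channel-wise second moments evaluated via $\mathbb{E}[(\epsilon)_c(\epsilon)_{c'}]=\delta_{cc'}$ and then rewritten in the cosine-similarity notation. Your bookkeeping remarks are accurate and in fact pinpoint exactly where the paper's proof is loose --- with $\sigma^2$ defined with the $1/N$ factor one has $\|(z_i)_c\|_2=\sqrt{N}$ rather than $1$, and the exact mean-term coefficient is $\bigl(\sum_{j}w_{cj}\bigr)^2$ rather than $\|w_c\|_2^2$, discrepancies the paper absorbs by asserting unit channel norm and proving the $\mu$ term only ``similarly''.
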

\begin{proof}
For convenience, we substitute $z_i$ for $\frac{x_i-\mu(x_i)\mathbbm{1}_{1\times N}}{\sigma(x_i)\mathbbm{1}_{1\times N}}$ in the following proof. And we first make some simplifications to Equation \ref{Appendix: expectation}:
\begin{equation}
    \begin{split}
    \label{transform1}
        \mathbb{E}[R(f)] &= \frac{1}{n}\sum_{i=1}^{n}\mathbb{E}_{\epsilon_{\sigma},\epsilon_{\mu}}[(w\cdot(x_i-\mu(x_i)\mathbbm{1}_{1\times N}+((\epsilon_{\sigma}\odot{\Sigma}_{\sigma}(x)) \mathbbm{1}_{1\times N})\odot z_i+\mu(x_i)\mathbbm{1}_{1\times N}+(\epsilon_{\mu}\odot{\Sigma}_{\mu}(x))\mathbbm{1}_{1\times N})+b-y_i)^2] \\
        &=\frac{1}{n}\sum_{i=1}^{n}\mathbb{E}_{\epsilon_{\sigma},\epsilon_{\mu}}[(w\cdot(((\epsilon_{\sigma}\odot{\Sigma}_{\sigma}(x)) \mathbbm{1}_{1\times N})\odot z_i+(\epsilon_{\mu}\odot{\Sigma}_{\mu}(x))\mathbbm{1}_{1\times N}) +w\cdot x_i+b-y_i)^2] \\
        &=\frac{1}{n}\sum_{i=1}^{n}(w\cdot x_i+b-y_i)^2+\frac{1}{n}\sum_{i=1}^{n}\mathbb{E}_{\epsilon_{\sigma},\epsilon_{\mu}}[(w\cdot(((\epsilon_{\sigma}\odot{\Sigma}_{\sigma}(x)) \mathbbm{1}_{1\times N})\odot z_i+(\epsilon_{\mu}\odot{\Sigma}_{\mu}(x))\mathbbm{1}_{1\times N}))^2 ]\\
        &+\frac{2}{n}\sum_{i=1}^{n}\mathbb{E}_{\epsilon_{\sigma},\epsilon_{\mu}}[(w\cdot(((\epsilon_{\sigma}\odot{\Sigma}_{\sigma}(x)) \mathbbm{1}_{1\times N})\odot z_i+(\epsilon_{\mu}\odot{\Sigma}_{\mu}(x))\mathbbm{1}_{1\times N}))(w\cdot x_i+b-y_i)].
    \end{split}
\end{equation}
Since the $\epsilon_{\sigma}, \epsilon_{\mu}$ are standard Gaussian distributions, the first-order terms in the Equation \ref{transform1} are all equal to zero:
\begin{equation}
\begin{split}
\label{transform2.1}
\sum_{i=1}^{n}\mathbb{E}_{\epsilon_{\sigma},\epsilon_{\mu}}[(w\cdot((\epsilon_{\mu}\odot{\Sigma}_{\mu}(x))\mathbbm{1}_{1\times N}))(w\cdot x_i+b-y_i)]=0, 
\end{split}
\end{equation}
\begin{equation}
\begin{split}
\label{transform2.2}
\sum_{i=1}^{n}\mathbb{E}_{\epsilon_{\sigma},\epsilon_{\mu}}[(w\cdot(((\epsilon_{\sigma}\odot{\Sigma}_{\sigma}(x)) \mathbbm{1}_{1\times N})\odot z_i)(w\cdot x_i+b-y_i)]=0.
\end{split}
\end{equation}
Since the $\epsilon_{\sigma}, \epsilon_{\mu}$ are independent distributions, the second-order term consisting of two first-order terms is equal to zero:
\begin{equation}
\begin{split}
\label{transform3}
&\quad\, \sum_{i=1}^{n}\mathbb{E}_{\epsilon_{\sigma},\epsilon_{\mu}}[(w\cdot(((\epsilon_{\sigma}\odot{\Sigma}_{\sigma}(x)) \mathbbm{1}_{1\times N})\odot z_i))(((\epsilon_{\mu}\odot{\Sigma}_{\mu}(x))\mathbbm{1}_{1\times N})(w\cdot x_i+b-y_i))] \\
&=\sum_{i=1}^{n}\mathbb{E}_{\epsilon_{\sigma}}[(w\cdot(((\epsilon_{\sigma}\odot{\Sigma}_{\sigma}(x)) \mathbbm{1}_{1\times N})\odot z_i)]\mathbb{E}_{\epsilon_{\mu}}[((\epsilon_{\mu}\odot{\Sigma}_{\mu}(x))\mathbbm{1}_{1\times N}))(w\cdot x_i+b-y_i)] \\
&=0.
\end{split}
\end{equation}
Combining Equation \ref{transform1}, \ref{transform2.1},\ref{transform2.2} and \ref{transform3}, we have
\begin{equation}
\begin{split}
\label{transform4}
    \mathbb{E}[R(f)]&=\frac{1}{n}\sum_{i=1}^{n}\mathbb{E}_{\epsilon_{\sigma}\sim\mathcal{N}(\textbf{0},\mathbf{I})}[(w\cdot(((\epsilon_{\sigma}\odot{\Sigma}_{\sigma}(x)) \mathbbm{1}_{1\times N})\odot z_i))^2]+\mathbb{E}_{\epsilon_{\mu}\sim\mathcal{N}(\textbf{0},\mathbf{I})}[((\epsilon_{\mu}\odot{\Sigma}_{\mu}(x))\mathbbm{1}_{1\times N})^2 ]\\
        &+\frac{1}{n}\sum_{i=1}^{n}(f(x_i)-y_i)^2.
\end{split}
\end{equation}
Note that, each channel of $z_i=\frac{x_i-\mu(x_i)\mathbbm{1}_{1\times N}}{\sigma(x_i)\mathbbm{1}_{1\times N}}$ is an unit vector:
\begin{equation}
\begin{split}
     \text{Denote}\ x_i:=(u_{st})_{C\times N},\forall s\in\{1,...,N\}, \quad &||(z_i)_s||_2=\sqrt{\sum_{t=1}^{N}(\frac{u_{st}-(\mu(x_i))_s}{(\sigma(x_i))_s})^2}=1,\\
     \text{where}\   (\mu(x_i))_s=\frac{1}{N}\sum_{t=1}^N u_{st}, \quad (\sigma&(x_i))_s=\sqrt{\frac{1}{N}\sum_{t=1}^{N}(u_{st}-(\mu(x_i))_s)^2}.
\end{split}    
\end{equation}
Then, we have
\begin{equation}
\begin{split}
\label{transform5}
&\quad\, \frac{1}{n}\sum_{i=1}^{n}\mathbb{E}_{\epsilon_{\sigma}\sim\mathcal{N}(\textbf{0},\textbf{I})}[(w\cdot(((\epsilon_{\sigma}\odot{\Sigma}_{\sigma}(x)) \mathbbm{1}_{1\times N})\odot z_i))^2] \\
&=\frac{1}{n}\sum_{i=1}^{n}\mathbb{E}_{\epsilon_{\sigma}\sim\mathcal{N}(\textbf{0},\textbf{I})}[(w\odot w)\cdot ((\epsilon_{\sigma}\odot \epsilon_{\sigma}\odot {\Sigma}_{\sigma}(x)\odot {\Sigma}_{\sigma}(x))\mathbbm{1}_{1\times N})\odot z_i\odot z_i)] \\
&=\frac{1}{n}\sum_{i=1}^{n}\mathbb{E}_{\epsilon_{\sigma}\sim\mathcal{N}(\textbf{0},\textbf{I})}[\sum_{j=1}^{C}(w_j\odot w_j)\cdot ((\epsilon_{\sigma})_j^2\cdot ({\Sigma}_{\sigma}(x))_j^2\cdot ((z_i)_j\odot (z_i)_j))] \\
&=\frac{1}{n}\sum_{i=1}^{n}\mathbb{E}_{\epsilon_{\sigma}\sim\mathcal{N}(\textbf{0},\textbf{I})}[\sum_{j=1}^{C}((\epsilon_{\sigma})_j^2\cdot ({\Sigma}_{\sigma}(x))_j^2\cdot (w_j\cdot (z_i)_j)^2]\\
&=\frac{1}{n}\sum_{i=1}^{n}\sum_{j=1}^{C} ({\Sigma}_{\sigma}(x))_j^2\cdot ||w_j||^2\cdot \langle w_j, (z_i)_j\rangle^2.
\end{split}
\end{equation}
Using the Fubini's theorem for infinite series, we have:
\begin{equation}
\begin{split}
\label{transform6}
&\quad\,\frac{1}{n}\sum_{i=1}^{n}\sum_{j=1}^{C} ({\Sigma}_{\sigma}(x))_j^2\cdot ||w_j||^2\cdot \langle w_j, (z_i)_j\rangle^2 =\frac{1}{n}\sum_{i=1}^{C}(\Sigma_{\sigma})_i^2\cdot\sum_{j=1}^{n}||w_i||_2^{2}\cdot \langle w_i,(\frac{x_j- \mu (x_j)\mathbbm{1}_{1\times N}}{\sigma (x_j)\mathbbm{1}_{1\times N}})_i\rangle^2.
\end{split}    
\end{equation}
Similarly, we can prove:
\begin{equation}
\begin{split}
\label{transform7}
\mathbb{E}_{\epsilon_{\mu}\sim\mathcal{N}(\textbf{0},\textbf{I})}[((\epsilon_{\mu}\odot{\Sigma}_{\mu}(x))\mathbbm{1}_{1\times N})^2 ]=\sum_{i=1}^{C} (\Sigma_{\mu})_i^2\cdot ||w_i||_2^{2} \,.
\end{split}    
\end{equation}
Combining Equation \ref{transform4},\ref{transform5},\ref{transform6} and \ref{transform7}, we see that the final result is proved.
\end{proof}
\end{document}